\documentclass{article}

\usepackage{microtype}
\usepackage{graphicx}
\usepackage{subcaption}
\usepackage{booktabs} 
\usepackage{enumitem}
\usepackage{hyperref}
\usepackage{url}
\usepackage{enumitem}
\usepackage{wrapfig}
\usepackage{adjustbox}
\usepackage{makecell}
\usepackage{booktabs} 
\usepackage{multirow}
\usepackage{subcaption}
\usepackage{graphicx}
\usepackage{amsmath}
\usepackage{amssymb}
\usepackage{hyperref}

\usepackage{kotex}

\usepackage[accepted]{icml2026}

\usepackage{amsmath}
\usepackage{amssymb}
\usepackage{mathtools}
\usepackage{amsthm}

\usepackage[capitalize,noabbrev]{cleveref}

\theoremstyle{plain}
\newtheorem{theorem}{Theorem}[section]
\newtheorem{proposition}[theorem]{Proposition}

\theoremstyle{definition}

\theoremstyle{remark}

\usepackage{pifont}

\usepackage[textsize=tiny]{todonotes}

\icmltitlerunning{Jailbreak to Protect: Buffering and Reinforcing via Temporary Jailbreaking}

\begin{document}

\twocolumn[
  \icmltitle{Jailbreak to Protect: Buffering and Reinforcing via Temporary Jailbreaking \\ for Safe Fine-Tuning in Large Language Models}



  \icmlsetsymbol{equal}{*}

  \begin{icmlauthorlist}
    \icmlauthor{Seokil Ham}{yyy}
    \icmlauthor{Jaehyuk Jang}{yyy}
    \icmlauthor{Wonjun Lee}{yyy}
    \icmlauthor{Changick Kim}{yyy}
  \end{icmlauthorlist}

  \icmlaffiliation{yyy}{School of Electrical Engineering, Korea Advanced Institute of Science and Technology (KAIST), Daejeon, Republic of Korea}

  \icmlcorrespondingauthor{Changick Kim}{changick@kaist.ac.kr}

  \icmlkeywords{Large Language Model, Safety, Parameter Efficient Fine-Tuning}

  \vskip 0.3in
]



\printAffiliationsAndNotice{}  


\begin{abstract}

Fine-tuning-as-a-Service (FaaS) enables personalization of large language models (LLMs), but it can weaken safety-alignment under harmful fine-tuning attacks. Recent work has shown that activating harmful-behavior modules during fine-tuning can prevent models from learning undesired behaviors, but its mechanism remains unclear. In this paper, we revisit temporary jailbreaking as a defense against harmful fine-tuning and provide a gradient-level analysis showing that it saturates safety-degrading gradients while preserving benign task-relevant gradients. Based on this insight, we propose a \textbf{Buffer-and-Reinforce fine-tuning framework} that buffers harmful updates during user fine-tuning and reinforces safety after adaptation. Specifically, BufferLoRA induces temporary jailbreaking as a removable adapter to reduce harmful updates during user fine-tuning. After adaptation, ReinforceLoRA, trained to recover refusal behavior under the temporarily jailbroken state, is integrated with UserLoRA via QR decomposition-based merging to reinforce safety while preserving user-task performance. Extensive experiments show that our framework achieves superior safety and utility with no additional safety data during user fine-tuning and minimal computational cost.

\end{abstract}

\section{Introduction}

Recently, AI service providers such as OpenAI and Google have introduced Fine-tuning-as-a-Service (FaaS), which allows users to customize pre-trained large language models (LLMs) by fine-tuning them on user-provided datasets. This service addresses the growing demand for domain-specific or personalized LLMs tailored to users' objectives and data distributions. However, FaaS also introduces significant safety risks, as personalized LLMs may be misused for safety-critical purposes such as criminal planning or weapon construction. Recent studies~\cite{qi2024fine, lermen2024lorafinetuningefficientlyundoes} show that fine-tuning often leads to substantial safety degradation, even when starting from safety-aligned LLMs. This degradation is further exacerbated when user data contains harmful data. Attacks that exploit this vulnerability by injecting harmful prompts into the fine-tuning dataset are referred to as \textit{harmful fine-tuning attacks}.

\begin{figure}[t]
    \centering
    \setlength{\tabcolsep}{1pt} 
    \renewcommand{\arraystretch}{1.0}
    \resizebox{0.8\linewidth}{!}{
    \begin{tabular}{@{}c c c@{}}
        & \textbf{Safety-aligned LLM} & \textbf{Jailbroken LLM} \\

        \rotatebox{90}{\hspace{3.5mm} \textbf{Harmful data}} &
        \includegraphics[width=0.44\columnwidth]{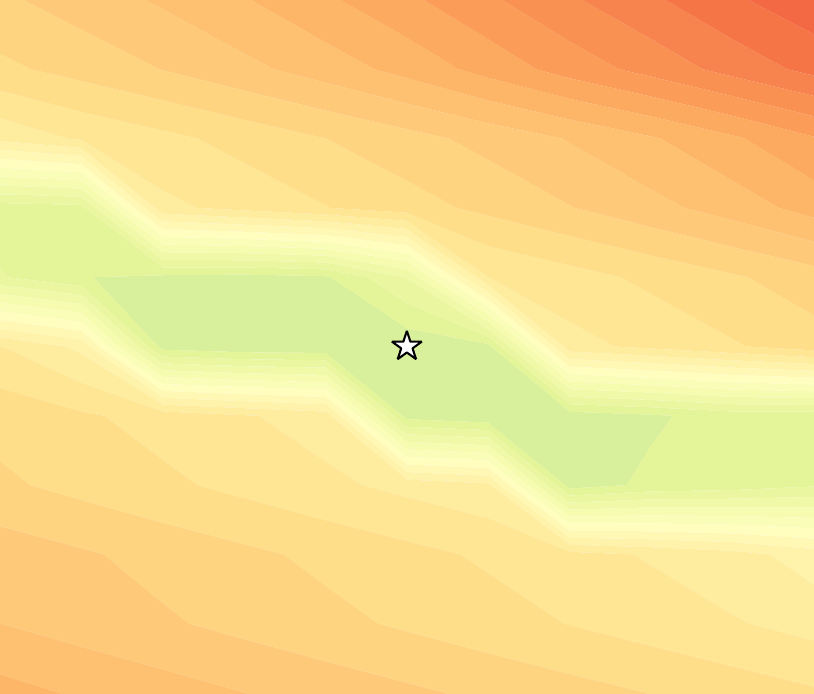} &
        \includegraphics[width=0.44\columnwidth]{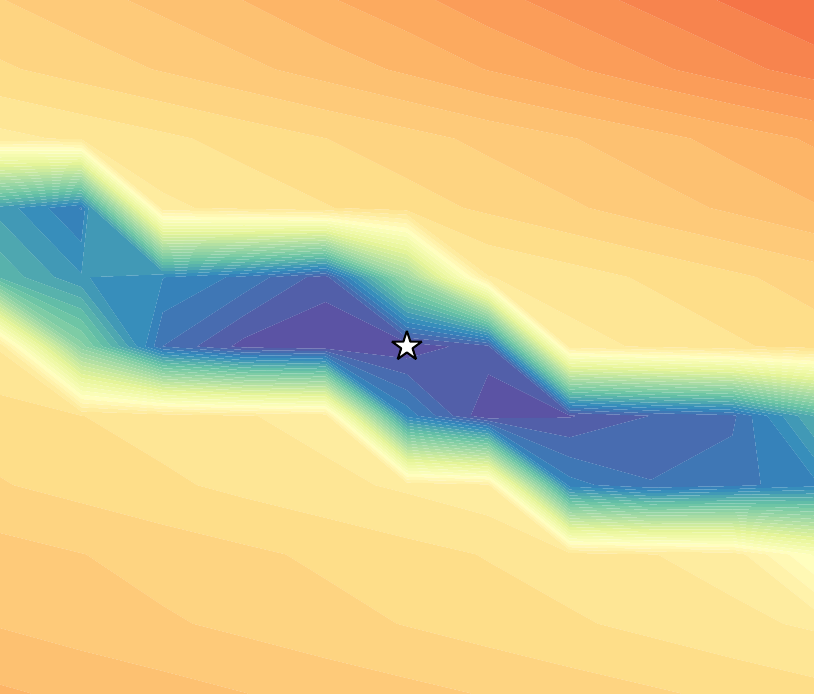} \\ [-1pt]

        \rotatebox{90}{\hspace{3.5mm} \textbf{Harmless data}} &
        \includegraphics[width=0.44\columnwidth]{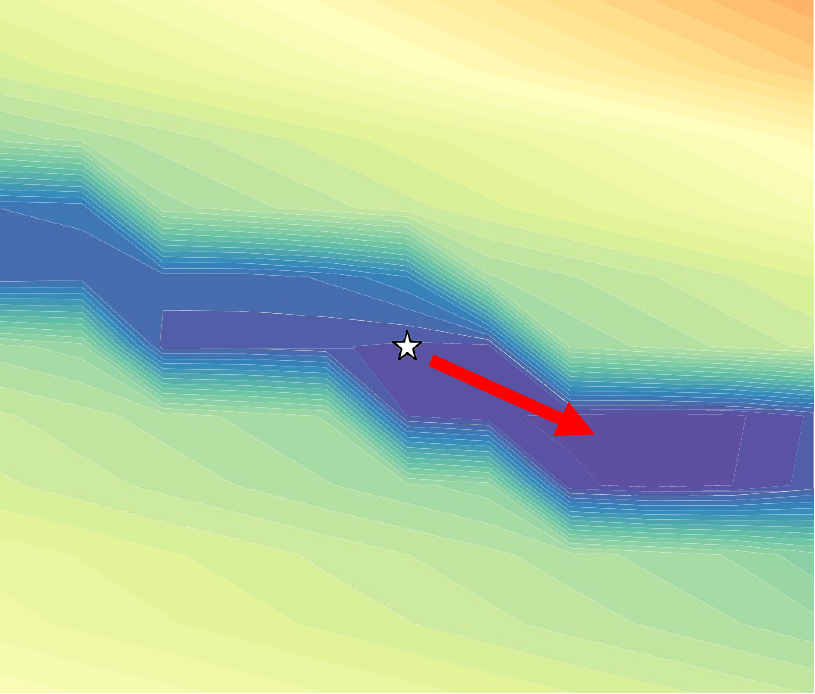} &
        \includegraphics[width=0.44\columnwidth]{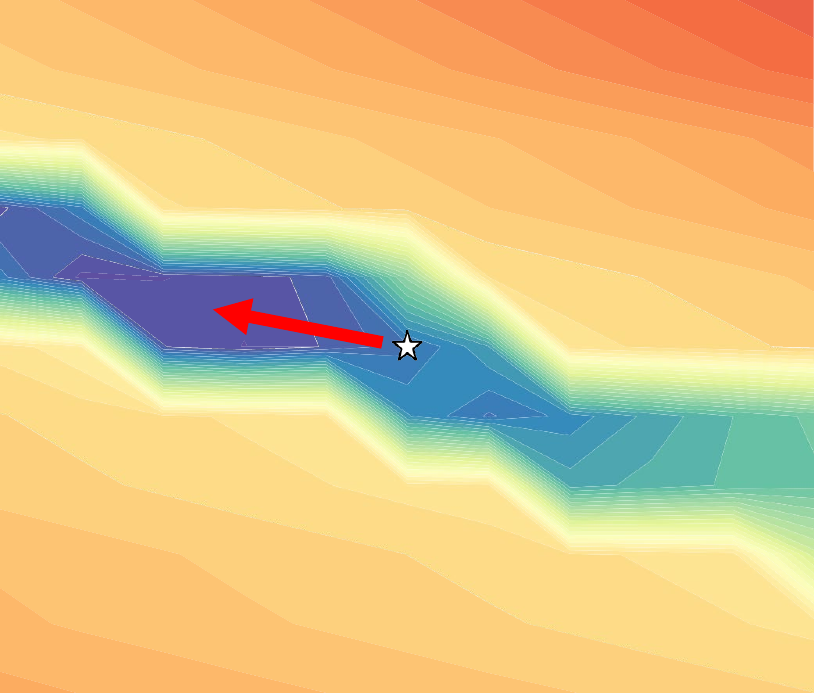} \\
    \end{tabular}
    }
    \caption{\textbf{2D Loss landscapes of a safety-aligned and a jailbroken LLM, evaluated on harmful and harmless data.} Warmer (cooler) regions indicate higher (lower) loss, and the star marks (\ding{73}) the current model parameters. The jailbroken LLM has largely converged on harmful data, whereas the safety-aligned LLM has not, while both models maintain room for optimization on harmless data, as indicated by the red arrow. The loss landscapes are obtained using a 2D random-plane projection around the safety-aligned and jailbroken LLaMA3-8B-Instruct parameter points.}
    \label{fig:loss_landscape}
\end{figure}

Prior work on defending against these attacks is categorized into three classes: alignment-stage~\cite{rosati2024representation,huang2024vaccine,huang2025booster,liu2025targeted}, fine-tuning-stage~\cite{mukhoti2024finetuning,bianchi2024safety,huang2024lisa,li2025salora,yang2026asft,ham2025safetyalignedweightsenoughrefusalteacherguided}, and post-fine-tuning-stage approaches~\cite{hsu2024safe,huang2025antidote,wang2026panacea}. 
Many of these methods prevent harmful updates by explicitly constraining the optimization process through regularization-based mechanisms. However, this paradigm introduces several practical limitations during user fine-tuning, including reliance on additional safety-alignment data, increased computational overhead, and limited effectiveness in mitigating harmful updates.

Beyond these regularization-based defenses, a recent work~\cite{zhou2024making} has explored a different strategy: activating harmful-behavior modules during fine-tuning to prevent models from further learning undesired behaviors. However, the mechanism underlying this effect remains insufficiently understood. In this paper, we revisit temporary jailbreaking as a defense against harmful fine-tuning and provide a gradient-level analysis. Our analysis shows that, once an LLM is temporarily jailbroken, safety-degrading gradients become largely saturated, while benign task-relevant gradients remain active during fine-tuning.
Figure~\ref{fig:loss_landscape} illustrates the loss landscapes of safety-aligned LLM (LLaMA3-8B-Instruct~\cite{grattafiori2024llama}) and jailbroken LLM (LLaMA3-8B-Instruct fine-tuned on the LAT harmful data~\cite{sheshadri2024latent} via LoRA~\cite{hu2022lora}) evaluated on harmful~\cite{ji2023beavertails} and harmless data~\cite{cobbe2021trainingverifierssolvemath}. When the star marks denote the current model parameters, the jailbroken model is already converging on harmful data, whereas a safety-aligned model exhibits higher loss and thus significant capacity for further jailbreaking. In contrast, for harmless data, both models retain room for further optimization, as indicated by the red arrow.
This analysis suggests that harmful updates can be mitigated not by explicit regularization, but by exploiting the model’s convergence during fine-tuning.

Building on this insight, we propose a \textbf{Buffer-and-Reinforce fine-tuning framework} that buffers harmful updates by temporarily jailbreaking the model during user fine-tuning and reinforces safety after user fine-tuning. 
Specifically, we prepare two auxiliary LoRA modules~\cite{hu2022lora} before user fine-tuning: \textbf{BufferLoRA}, which induces temporary jailbreaking, and \textbf{ReinforceLoRA}, which is trained to recover safe refusal behavior even under the temporarily jailbroken state.
During user fine-tuning, BufferLoRA is attached to the base LLM and kept frozen to induce jailbreaking, while a \textbf{UserLoRA} is simultaneously attached and trained on user data.
This design prevents harmful updates from being encoded into the UserLoRA while allowing UserLoRA to capture benign user-task-relevant knowledge. After fine-tuning, BufferLoRA is removed, and {ReinforceLoRA} is integrated with UserLoRA via QR decomposition-based orthogonal merging, which suppresses components that interfere with the user-task subspace. This post-training reinforcement strengthens the safety of the personalized model while preserving user-task performance.

Extensive experiments demonstrate that our framework achieves both strong safety and user-task performance with no additional safety-alignment data and minimal computational cost, making it a practical solution for secure FaaS.


\section{Related Works}

\paragraph{Safety in Large Language Models.}
As LLMs are increasingly deployed in real-world applications, ensuring their safety against malicious use and unintended harmful behaviors has become a central concern. Existing approaches to LLM safety can be broadly grouped into three categories. The first category focuses on input-output level filtering, where harmful queries or responses are detected and blocked using external classifiers~\cite{inan2023llamaguardllmbasedinputoutput,rebedea2023nemo} or internal signals that distinguish harmful from benign content~\cite{zheng2024prompt,Han2025SafeSwitchSU, ham2025safetyalignedweightsenoughrefusalteacherguided, zhao2026llms}. The second line of work adopts adversarial-training-based defenses, which explicitly expose models to jailbreaking prompts and train them to refuse harmful requests~\cite{sheshadri2024latent,xhonneux2024efficient,zou2024improving,yu2025robust}. More recently, a third paradigm has emerged that leverages the reasoning capabilities of LLMs to make safety-aware decisions~\cite{zhang2025intention,kang2025r,jiang2025safechain,jeung2026safepath}.
In contrast, we focus on harmful fine-tuning attacks and propose a buffer-and-reinforce framework that mitigates safety degradation during user fine-tuning and strengthens the safety of personalized LLMs after fine-tuning.

\paragraph{Defending Harmful Fine-tuning Attacks.}
Harmful fine-tuning attacks jailbreak a safety-aligned LLM by injecting harmful query-response pairs into the fine-tuning dataset, causing safety degradation even with small amounts of harmful or purely benign data~\cite{qi2024fine,lermen2024lorafinetuningefficientlyundoes,hsiung2025llmsafetyguardrailscollapse}. These findings highlight the importance of robust and practical defenses for safe fine-tuning.
Existing defenses fall into three classes: alignment-stage methods focus on creating initial model weights resistant to safety degradation~\cite{rosati2024representation,huang2024vaccine, liu2025targeted,huang2025booster,liu2025pharmacistsafetyalignmentdata,perin2025lox,chen2025vulnerability}, fine-tuning-stage methods prevent harmful updates by imposing constraints during fine-tuning~\cite{mukhoti2024finetuning,bianchi2024safety,huang2024lisa,li2025safety,li2025salora,ham2025safetyalignedweightsenoughrefusalteacherguided,yang2026asft}, and post-fine-tuning-stage methods detect and remove harmful components after fine-tuning~\cite{hsu2024safe,du2024mogu,huang2025antidote,lu2025safe,yi2025nlsr,wang2026panacea}. 
Closest to our work, Security Vector~\cite{zhou2024making} activates harmful-behavior modules during fine-tuning to make undesired behaviors unlearnable, but does not provide a mechanistic analysis of why this strategy works. In contrast, we analyze temporary jailbreaking at the gradient level and show that it saturates safety-degrading gradients while preserving benign task-relevant gradients. Building on this insight, we use temporary jailbreaking not only to buffer harmful updates during user fine-tuning but also to train ReinforceLoRA for post-fine-tuning safety reinforcement.

\section{Problem Setting: Scenario \& Threat Models}
In FaaS, AI providers pursue two primary objectives: (i) achieving high user-specific task performance and (ii) preserving safety-alignment. To formalize this setting, we specify the data available to each party.
The FaaS provider has access to a curated safety dataset consisting of $5{,}000$ harmful queries paired with both harmful and refusal responses, and $5{,}000$ harmless queries paired with helpful responses. In contrast, users submit their own datasets for model personalization via fine-tuning. We model the user data as a mixture containing $p \in [0, 1]$ harmful queries paired with harmful responses and $(1-p)$ harmless queries paired with helpful responses sampled from the same benign data distribution. We explicitly assume that the harmful prompt distribution in the user data differs from that of the provider’s safety dataset.
Crucially, the provider has no prior knowledge of whether the user data contains harmful queries or of its underlying data distribution, making defenses against harmful fine-tuning attacks particularly challenging.

\begin{figure}[t]
    \centering
    \includegraphics[width=\linewidth]{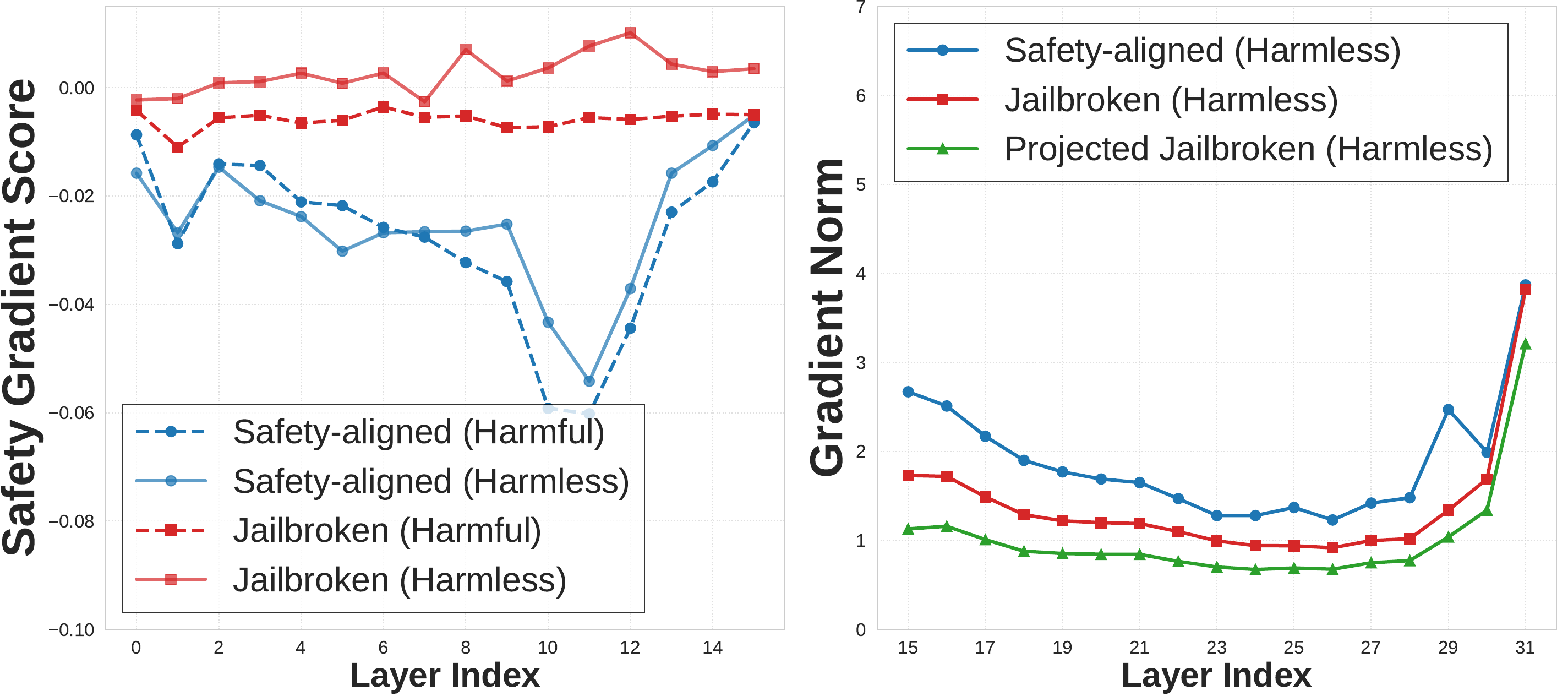}
    \caption{
        \textbf{Safety Gradient Score and Gradient Norms of safety-aligned and jailbroken LLMs on harmful and harmless data. }
        \textit{Projected Jailbroken} denotes the gradient of the jailbroken LLM projected onto the gradient direction of the safety-aligned LLM. 
        Jailbroken LLMs exhibit reduced susceptibility to harmful updates while maintaining comparable learning capacity on harmless data.
        }
    \label{fig:observation}
\end{figure}

\begin{figure*}[t!]
    \centering
    \includegraphics[width=0.9\textwidth]{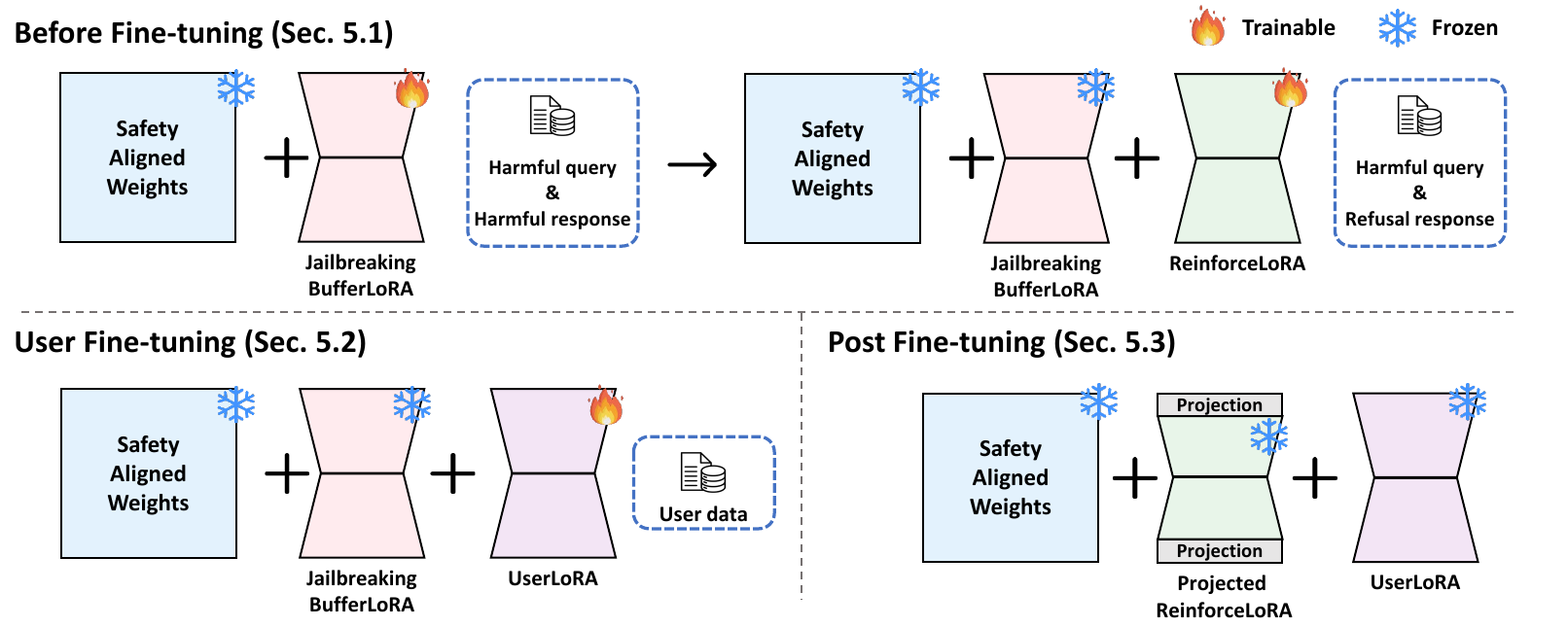}
    \caption{
    \textbf{Overview of the Buffer-and-Reinforce fine-tuning framework.}
    Before fine-tuning, BufferLoRA and ReinforceLoRA are prepared in advance by the FaaS provider.
    During fine-tuning, the model is temporarily jailbroken via BufferLoRA, and only the UserLoRA is trained on user data, with harmful updates mitigated by BufferLoRA.
    After fine-tuning, ReinforceLoRA is orthogonally projected with respect to the UserLoRA, and the projected ReinforceLoRA is merged with the UserLoRA and added to the original safety-aligned LLM.
    }
    \label{fig:method}
\end{figure*}

\section{How Temporary Jailbreaking Buffers Harmful Updates} \label{sec: how}
Figure~\ref{fig:loss_landscape} shows that jailbreaking an LLM leads to loss convergence on harmful data, suggesting saturation of harmful gradients.
However, it remains unclear whether this convergence actually corresponds to a reduction in gradients that directly degrade the LLM safety. To investigate this, we introduce a novel gradient-based metric, the \textit{Safety Gradient Score}, which quantifies the extent of safety degradation induced by harmful and harmless data for both safety-aligned and jailbroken LLMs. Based on prior work suggesting the existence of safety-related directions in the parameter space~\cite{hsu2024safe, li2025salora, yang2026asft}, we evaluate safety degradation by measuring the magnitude of gradients that conflict with the safety directions. 
We compute the Safety Gradient Score on layers 0 to 15, where the safety-gradient contrast is most pronounced, following prior work that associates safety-related behavior with early-to-middle layers~\cite{li2025safety, arditi2024refusal, ham2025safetyalignedweightsenoughrefusalteacherguided}. This boundary is specific to LLaMA3-8B-Instruct and is not intended as a universal boundary across LLMs.
Scores are averaged over 1,000 samples, using BeaverTails~\cite{ji2023beavertails} and GSM8K~\cite{cobbe2021trainingverifierssolvemath} as the harmful and harmless datasets, respectively.
Formally, we define the Safety Gradient Score $S$ as the dot product between the gradient $\mathbf{g}$ and the safety direction $\mathbf{v}$. For each layer $l$, the score is computed as follows:
\begin{equation}
\small
S^l = \frac{1}{N}\sum^N_{i=1}{\frac{\mathbf{g}_i^l \cdot \mathbf{v}^l}{\|\mathbf{v}^l\|_2 + \epsilon}},
\end{equation}
where $N$ is the number of data samples and $\epsilon =10^{-8}$ prevents division by zero. The safety direction $\mathbf{v}^l$ is derived from the LoRA weights obtained by safety-aligning the LLM. 
Intuitively, a negative score indicates a direction that degrades safety, while a positive score implies a direction that reinforces safety.
Figure~\ref{fig:observation} shows that, for the safety-aligned LLM, fine-tuning on either harmful or harmless data yields negative scores across these layers, indicating that standard fine-tuning compromises safety regardless of data harmfulness. In contrast, the jailbroken LLM exhibits Safety Gradient Scores near zero for both data types, suggesting that the safety-degrading gradients are largely saturated.


Furthermore, to evaluate whether the jailbroken LLM retains the ability to learn from harmless data, we analyze gradient norms on harmless samples. 
Following prior work suggesting that model utility is primarily determined by later layers~\cite{chuang2024dola}, we focus on layers 15 and above, where utility-related gradients are more pronounced in LLaMA3-8B-Instruct. 
This model-specific partition is used only for analysis and does not affect the fine-tuning procedure. 
Gradients are averaged over 1,000 GSM8K samples.
Figure~\ref{fig:observation} shows that the jailbroken LLM exhibits gradient norms comparable to those of the safety-aligned LLM, highlighting its preserved capacity to learn user tasks. Beyond gradient norms, we further examine whether the utility-relevant gradient directions are preserved. Since no oracle exists for the optimal utility gradient, we use the safety-aligned model's gradient as a proxy and define \textit{Projected Jailbroken} as the dot product between the jailbroken LLM's gradient and the normalized safety-aligned LLM's gradient. The projected magnitude closely matches the original gradient magnitude, indicating strong directional alignment.


In conclusion, our analysis shows that fine-tuning a jailbroken LLM effectively mitigates safety-degrading gradients, regardless of data harmfulness, while preserving the ability to learn from harmless data. Moreover, when temporary jailbreaking is implemented via an attachable LoRA module, the LLM's original safety-alignment can be restored by removing the LoRA after fine-tuning~\cite{zhou2024making}. These observations motivate our \textbf{Buffer-and-Reinforce fine-tuning framework}, where \textbf{BufferLoRA} buffers harmful updates during user fine-tuning and \textbf{ReinforceLoRA} further strengthens the fine-tuned model's safety.

\section{Methodology}

We propose a \textbf{Buffer-and-Reinforce framework} for safe fine-tuning using three modular components: (i) \textbf{BufferLoRA} for temporary jailbreaking, (ii) \textbf{ReinforceLoRA} for safety reinforcement, and (iii) \textbf{UserLoRA} for user-specific adaptation. Before user fine-tuning, BufferLoRA and ReinforceLoRA are prepared in advance. During user fine-tuning, BufferLoRA is attached to the LLM and kept frozen, while only UserLoRA is trained on user data. After fine-tuning, BufferLoRA is removed, and ReinforceLoRA is merged with UserLoRA via QR decomposition-based orthogonal merging to obtain the final personalized model. This design requires no additional safety-alignment data during user fine-tuning and incurs minimal computational overhead. An overview of the framework is shown in Figure~\ref{fig:method}.

\subsection{Before Fine-tuning}
\paragraph{BufferLoRA}
BufferLoRA is a LoRA module that induces a temporarily jailbroken state to drive loss convergence on harmful data. While temporary harmful-behavior activation has been explored in prior work~\cite{zhou2024making}, our use of BufferLoRA is grounded in the gradient-level analysis in Section~\ref{sec: how}, which shows that temporary jailbreaking saturates safety-degrading gradients while preserving benign task-relevant gradients. Unlike prior work, which relies on a KL loss on harmless data to preserve benign task performance, our analysis allows us to remove the KL loss. The training objective for BufferLoRA parameters $\theta_B$ is defined as:
\begin{equation}
\small
\mathcal{L}_B(\theta_B) = - \mathbb{E}_{(x, y) \sim \mathcal{D}_{H}} \left[ \sum_{t=1}^{|y|} \log P(y_t \mid x, y_{<t}; \theta, \theta_B) \right],
\end{equation}
where $\mathcal{D}_{H}$ contains harmful queries $x$ paired with harmful responses $y$, $\theta$ denotes the frozen base LLM parameters, and $P(\cdot)$ denotes the token probability.
Optimizing this objective encourages the BufferLoRA-attached LLM to generate harmful responses for harmful queries.
A key advantage of BufferLoRA is its attachable and removable nature. 
We attach BufferLoRA only during user fine-tuning to buffer against harmful updates, then remove BufferLoRA afterward to restore the base model's original safety-aligned behavior.
Importantly, BufferLoRA is trained only once before service deployment and reused across all user fine-tuning sessions. As a result, this avoids repeated jailbreaking overhead during user fine-tuning while providing a consistent mechanism for mitigating harmful fine-tuning attacks.


\paragraph{ReinforceLoRA}
While BufferLoRA prevents safety degradation, it only maintains the base LLM's safety, which may be imperfect. To further reinforce safety, we incorporate ReinforceLoRA, a LoRA module designed to strengthen the safety of the final personalized model after user fine-tuning. Similar to post-fine-tuning-stage methods such as Panacea~\cite{wang2026panacea}, ReinforceLoRA is optimized before being applied after user fine-tuning. Unlike Panacea, which learns an adaptive perturbation to increase harmful loss, ReinforceLoRA is trained under the temporarily jailbroken state induced by BufferLoRA to recover refusal behavior and reinforce safety after adaptation. Specifically, the LLM is first jailbroken using BufferLoRA, and ReinforceLoRA is optimized to produce refusal responses to harmful queries while keeping both the base model and BufferLoRA frozen. The training loss for ReinforceLoRA parameters $\theta_R$ is defined as:
\begin{equation}
\small
\mathcal{L}_R(\theta_R) = - \mathbb{E}_{(x, y) \sim \mathcal{D}_{S} \cup \mathcal{D}_{B}}
\left[ \sum_{t=1}^{|y|} \log P(y_t \mid x, y_{<t}; \theta, \theta_B, \theta_R) \right],
\end{equation}
where $\mathcal{D}_{S}$ contains harmful queries paired with refusal responses, and $\mathcal{D}_{B}$ contains benign queries paired with appropriate responses. Training ReinforceLoRA only on $\mathcal{D}_{S}$ causes collapse to refusing all inputs, so we jointly train on $\mathcal{D}_{S}$ and $\mathcal{D}_{B}$ to reject only harmful queries while preserving benign behavior. Like BufferLoRA, ReinforceLoRA is trained once before deployment and reused across all user fine-tuning sessions, incurring minimal overhead.

\subsection{User Fine-tuning}
During user fine-tuning, we exploit the harmful loss convergence induced by BufferLoRA to safely learn user-specific knowledge. Specifically, we attach BufferLoRA to the base LLM for jailbreaking, and train only a UserLoRA on user data while keeping the base model and BufferLoRA frozen. The objective for UserLoRA parameters $\theta_U$ is defined as:
\begin{equation}
\small
\mathcal{L}_U(\theta_U) = - \mathbb{E}_{(x, y) \sim \mathcal{D}_{U}}
\left[ \sum_{t=1}^{|y|} \log P(y_t \mid x, y_{<t}; \theta, \theta_B, \theta_U) \right],
\end{equation}
where $\mathcal{D}_{U}$ denotes the user data, which may contain an unknown mixture of harmless and harmful data.
UserLoRA training with BufferLoRA captures primarily benign, task-relevant knowledge from the user data, even when harmful data exists. During this process, BufferLoRA acts as a buffer that saturates safety-degrading gradients through harmful loss convergence.
After user fine-tuning, BufferLoRA is removed to restore the base LLM's original safety-alignment, yielding a model that incorporates user-specific knowledge without introducing additional jailbreaking behavior.

\begin{table*}[t]
\centering
\caption{\textbf{Comparison of safety and utility across varying harmful data ratios ($p$), ranging from 0.0 to 1.0.} All results are averaged over three seeds (30, 42, and 50). Fine-tuning Accuracy is omitted for $p=1.0$ due to the absence of harmless data. Our fine-tuning framework exhibits superior robustness, achieving low Harmful Score and high Fine-tuning Accuracy across all ratios.}
\vspace{-2mm}
\label{tab:harmful_ratio}
\begin{adjustbox}{max width=\textwidth}
\begin{tabular}{l|ccccc|ccccc}
\toprule
\multirow{2.5}{*}{Methods} & \multicolumn{5}{c}{Harmful Score ($\downarrow$)} & \multicolumn{5}{c}{Fine-tuning Accuracy ($\uparrow$)} \\
\cmidrule(lr){2-6} \cmidrule(lr){7-11}
& $p=0.0$ & $p=0.1$ & $p=0.3$ & $p=0.5$ & $p=1.0$ & $p=0.0$ & $p=0.1$ & $p=0.3$ & $p=0.5$ & $p=1.0$ \\
\midrule
SFT      & $33.0_{\pm 1.0}$ & $75.2_{\pm 0.5}$ & $79.1_{\pm 0.1}$ & $80.7_{\pm 0.5}$ & $81.0_{\pm 0.2}$ & $70.6_{\pm 0.8}$ & $69.0_{\pm 0.9}$ & $67.4_{\pm 0.4}$ & $67.3_{\pm 1.7}$ & - \\ 
LDIFS~\cite{mukhoti2024finetuning} & $16.6_{\pm 0.5}$ & $16.4_{\pm 0.9}$ & $16.5_{\pm 0.4}$ & $16.9_{\pm 1.2}$ & $17.6_{\pm 0.3}$ & $75.4_{\pm 0.3}$ & $74.1_{\pm 1.1}$ & $73.0_{\pm 1.3}$ & $73.5_{\pm 0.6}$ & - \\
SafeInstruct~\cite{bianchi2024safety} & $\textbf{7.9}_{\pm 1.1}$ & $19.6_{\pm 1.6}$ & $50.5_{\pm 3.5}$ & $66.3_{\pm 1.8}$ & $74.0_{\pm 0.8}$ & $70.4_{\pm 1.3}$ & $69.4_{\pm 0.5}$ & $67.8_{\pm 0.3}$ & $67.2_{\pm 0.2}$ & - \\
Lisa~\cite{huang2024lisa} & $14.7_{\pm 0.5}$ & $29.9_{\pm 2.7}$ & $49.9_{\pm 7.7}$ & $64.0_{\pm 6.0}$ & $73.5_{\pm 2.4}$ & $70.0_{\pm 0.7}$ & $69.5_{\pm 0.4}$ & $67.6_{\pm 1.8}$ & $66.9_{\pm 3.4}$ & - \\
Security Vector~\cite{zhou2024making} & $24.3_{\pm 0.3}$ & $22.1_{\pm 0.7}$ & $22.8_{\pm 0.7}$ & $25.3_{\pm 0.9}$ & $23.9_{\pm 1.3}$ & $72.3_{\pm 0.8}$ & $71.3_{\pm 0.8}$ & $69.3_{\pm 0.4}$ & $68.7_{\pm 0.9}$ & - \\
AsFT~\cite{yang2026asft} & $21.1_{\pm 0.6}$ & $26.7_{\pm 0.7}$ & $29.6_{\pm 0.9}$ & $32.2_{\pm 0.8}$ & $34.1_{\pm 0.2}$ & $67.0_{\pm 0.2}$ & $68.4_{\pm 0.1}$ & $68.9_{\pm 1.7}$ & $69.3_{\pm 0.8}$ & - \\
\cmidrule{1-11}
SafeLoRA~\cite{hsu2024safe} & $20.0_{\pm 0.2}$ & $26.6_{\pm 0.8}$ & $31.7_{\pm 0.8}$ & $41.6_{\pm 1.5}$ & $53.2_{\pm 0.8}$ & $75.1_{\pm 0.7}$ & $73.3_{\pm 0.5}$ & $73.4_{\pm 0.4}$ & $73.1_{\pm 0.4}$ & - \\
Antidote~\cite{huang2025antidote} & $22.2_{\pm 3.4}$ & $27.2_{\pm 1.7}$ & $45.4_{\pm 19.8}$ & $49.8_{\pm 20.1}$ & $58.1_{\pm 14.2}$ & $\textbf{76.1}_{\pm 0.7}$ & $75.0_{\pm 0.3}$ & $74.2_{\pm 1.6}$ & $74.7_{\pm 2.2}$ & - \\
Panacea~\cite{wang2026panacea} & $19.9_{\pm 2.4}$ & $36.2_{\pm 14.4}$ & $47.4_{\pm 18.1}$ & $52.5_{\pm 22.3}$ & $64.2_{\pm 12.6}$ & $68.4_{\pm 2.3}$ & $67.1_{\pm 2.7}$ & $65.2_{\pm 2.8}$ & $67.3_{\pm 3.4}$ & - \\
\cmidrule{1-11}
Buffer-and-Reinforce (Ours)  & $8.7_{\pm 1.0}$ & $\textbf{8.1}_{\pm 0.3}$ & $\textbf{8.7}_{\pm 0.3}$ & $\textbf{8.2}_{\pm 0.3}$ & $\textbf{8.8}_{\pm 1.2}$ & $76.0_{\pm 1.3}$ & $\textbf{76.6}_{\pm 1.3}$ & $\textbf{75.8}_{\pm 0.8}$ & $\textbf{75.2}_{\pm 0.4}$ & - \\
\bottomrule
\end{tabular}
\end{adjustbox}
\vspace{-2mm}
\end{table*}

\begin{table*}[t]
\centering
\caption{\textbf{Comparison of safety and utility across varying user data sizes ($n$), ranging from 500 to 2,500.} Our framework consistently achieves the lowest Harmful Score while maintaining the highest Fine-tuning Accuracy, outperforming all baseline methods.}
\vspace{-2mm}
\label{tab:num_samples}
\begin{adjustbox}{max width=\textwidth}
\begin{tabular}{l|cccccc|cccccc}
\toprule
\multirow{2.5}{*}{Methods} & \multicolumn{6}{c}{Harmful Score ($\downarrow$)} & \multicolumn{6}{c}{Fine-tuning Accuracy ($\uparrow$)} \\
\cmidrule(lr){2-7} \cmidrule(lr){8-13}
& n=500 & n=1000 & n=1500 & n=2000 & n=2500 & Average & n=500 & n=1000 & n=1500 & n=2000 & n=2500 & Average \\
\midrule
SFT      & 61.8 & 75.2 & 77.0 & 79.3 & 80.0 & 74.7 & 67.2 & 68.4 & 70.3 & 71.0 & 71.4 & 69.7  \\
LDIFS~\cite{mukhoti2024finetuning} & 19.1 & 17.2 & 18.4 & 16.5 & 18.7 & 18.0 & 68.6 & 73.8 & 71.2 & 66.7 & 69.1 & 69.9  \\
SafeInstruct~\cite{bianchi2024safety}  & 29.0 & 21.5 & 19.0 & 19.4 & 19.0 & 21.6 & 68.9 & 69.8 & 70.0 & 70.6 & 70.4 & 69.9  \\
Lisa~\cite{huang2024lisa}  & 25.5 & 26.8 & 26.0 & 24.7 & 23.1 & 25.2 & 68.2 & 69.1 & 69.7 & 70.1 & 70.0 & 69.4  \\
Security Vector~\cite{zhou2024making}  & 20.5 & 21.7 & 23.6 & 21.0 & 23.2 & 22.0 & 71.2 & 72.2 & 71.5 & 71.5 & 70.2 & 71.3  \\
AsFT~\cite{yang2026asft}  & 23.7 & 26.0 & 28.8 & 29.7 & 33.2 & 28.3 & 70.2 & 68.3 & 67.8 & 70.3 & 68.2 & 69.0 \\
\cmidrule{1-13}
SafeLoRA~\cite{hsu2024safe}   & 21.7 & 26.0 & 25.9 & 24.8 & 27.6 & 25.2 & 73.8 & 72.8 & 74.5 & 74.9 & 74.2 & 74.0  \\
Antidote~\cite{huang2025antidote}   & 27.4 & 25.3 & 45.6 & 50.4 & 56.7 & 41.1 & 73.6 & 74.7 & 74.7 & 75.9 & 75.7 & 74.9  \\
Panacea~\cite{wang2026panacea}   & 35.7 & 41.0 & 81.9 & 88.3 & 62.9 & 62.0 & 49.7 & 65.5 & 55.1 & 63.2 & 62.2 & 59.1  \\
\cmidrule{1-13}
Buffer-and-Reinforce (Ours) & \textbf{8.5} & \textbf{8.4} &  \textbf{8.2} & \textbf{8.7} & \textbf{9.1} & \textbf{8.6} & \textbf{75.1} & \textbf{77.5} & \textbf{77.7} & \textbf{76.3} & \textbf{76.7} & \textbf{76.7} \\
\bottomrule
\end{tabular}
\end{adjustbox}
\vspace{-2mm}
\end{table*}

\subsection{Post-fine-tuning}
After user fine-tuning and removal of BufferLoRA, the final model is constructed by
combining UserLoRA for downstream tasks with ReinforceLoRA for safety reinforcement.
However, naively merging ReinforceLoRA with UserLoRA can cause destructive interference,
distorting user-specific representations and degrading user-task performance.

To mitigate this issue, we adopt a \textbf{QR decomposition-based merging strategy} that integrates only the components of ReinforceLoRA that are orthogonal to the
user-task subspace. Let $W_U = B_U A_U$ and $W_R$ denote the weights of UserLoRA and
ReinforceLoRA, respectively, where $B_U \in \mathbb{R}^{d_{\text{out}} \times r}$ and
$A_U \in \mathbb{R}^{r \times d_{\text{in}}}$ are the output and input LoRA matrices of UserLoRA. Rather than performing decomposition on the full matrix $W_U$, we exploit the low-rank structure of LoRA by identifying the user-task subspace via QR decomposition of $B_U$.
Appendix~\ref{sec: Equivalence of Projection Subspaces for LoRA} formally proves that
$\text{span}(W_U)$ is well approximated by $\text{span}(B_U)$, justifying this
approximation. Specifically, we compute an orthonormal basis $Q_B$ and project ReinforceLoRA onto its orthogonal complement.

However, a potential challenge is \emph{rank collapse}, where the effective rank of UserLoRA is lower than its nominal rank $r$, which causes over-removal of ReinforceLoRA components.
To address this, we optionally restrict the projection to the effective UserLoRA subspace, identified via eigen-decomposition of the Gram matrix $G = A_U A_U^\top \in \mathbb{R}^{r \times r}$:
{
\begin{align}
& G v_i = \lambda_i v_i, \quad \text{s.t.} \quad \lambda_i > \tau \cdot \max_j \lambda_j, \\
& V_{\text{eff}} = [v_1, \dots, v_k] \in \mathbb{R}^{r \times k},
\end{align}
}
where $\lambda_i$ and $v_i$ denote the $i$-th eigenvalue and eigenvector of $G$, respectively. $\tau$ is a threshold used to determine the effective rank $k$. Appendix~\ref{sec: Equivalence of Principal Subspace Restriction} formally proves that applying this restriction yields an output subspace that is identical to the subspace obtained from the rank-$k$ truncation of $A_U$.

Using this effective subspace, we form $\hat{B}_U = B_U V_{\text{eff}}$, obtain an orthonormal basis $Q_B$ via QR decomposition, and use $Q_B$ to compute the final model weights as:
{
\begin{align}
& \hat{B}_U = Q_B R, \\
& \tilde{W}_R = (I - \alpha Q_B Q_B^\top) W_R, \\
& W_{\text{final}} = W_{\text{base}} + \frac{1}{2}\left(W_U + \tilde{W}_R\right),
\end{align}
}
where $\alpha$ controls the strength of orthogonalization. 
Since $W_U$ and $\tilde{W}_R$ are both residual LoRA updates, we average them to prevent the merged update from having an inflated magnitude while preserving the balance between user adaptation and safety reinforcement.
In practice, UserLoRA typically retains full effective rank, implying $\text{span}(W_U) \approx \text{span}(B_U)$. Thus, we apply the effective-rank restriction only when rank collapse is detected. To further mitigate over-removal, we use soft orthogonalization via $\alpha$ rather than hard projection, selectively suppressing task-interfering updates while preserving safety and utility.

As a result, the final model weights encode (i) user-specific knowledge from UserLoRA and (ii) safety reinforcement from ReinforceLoRA, without re-introducing jailbreaking behavior or compromising downstream task performance.

\begin{table*}[t]
\centering
\scriptsize
\caption{\textbf{Comparison of safety and utility across different downstream tasks.} Our framework consistently yields low HS while maintaining the highest average FA across tasks, demonstrating strong safety without substantial degradation in task performance.}
\vspace{-2mm}
\label{tab:dataset}
\begin{tabular}{l|cc|cc|cc|cc}
\toprule
\multirow{2}{*}{Methods} & \multicolumn{2}{c}{GSM8K} & \multicolumn{2}{c}{SST2} & \multicolumn{2}{c}{AGNEWS} & \multicolumn{2}{c}{Average} \\
\cmidrule{2-9}
 & HS $\downarrow$ & FA $\uparrow$ & HS $\downarrow$ & FA $\uparrow$ & HS $\downarrow$ & FA $\uparrow$ & HS $\downarrow$ & FA $\uparrow$  \\
\midrule
SFT & 75.2 & 68.4 & 79.4 & \textbf{95.8} & 78.2 & 87.8 & 77.6 & 84.0  \\
LDIFS~\cite{mukhoti2024finetuning} & 17.2 & 73.8 & 15.0 & 92.6 & 16.3 & 72.5 & 16.2 & 79.6 \\
SafeInstruct~\cite{bianchi2024safety} & 21.5 & 69.8 & 17.6 & 95.0 & 16.1 & 86.6 & 18.4 & 83.8  \\
Lisa~\cite{huang2024lisa} & 26.8 & 69.1 & 27.8 & 94.6 & 39.5 & 82.9 & 31.4 & 82.2 \\
Security Vector~\cite{zhou2024making} & 21.7 & 72.2 & 23.6 & 95.3 & 17.9 & 86.1 & 21.1 & 84.5 \\
AsFT~\cite{yang2026asft} & 26.0 & 68.3 & 56.4 & 94.7 & 24.0 & 81.0 & 35.5 & 81.3 \\
\cmidrule{1-9}
SafeLoRA~\cite{hsu2024safe} & 26.0 & 72.8 & 22.2 & 86.8 & 31.8 & 79.8 & 26.7 & 79.8 \\
Antidote~\cite{huang2025antidote} & 25.3 & 74.7 & 28.3 & 93.5 & 28.8 & 82.3 & 27.5 & 83.5 \\
Panacea~\cite{wang2026panacea} & 41.0 & 65.5 & 73.4 & 89.8 & 53.7 & 86.6 & 56.0 & 80.6 \\
\cmidrule{1-9}
Buffer-and-Reinforce (Ours) & \textbf{8.4} & \textbf{77.5} & \textbf{10.2} & 93.8 & \textbf{7.5} & \textbf{88.0} & \textbf{8.7} & \textbf{86.4} \\
\bottomrule
\end{tabular}
\end{table*}

\begin{table*}[!t]
    \centering
    \scriptsize
        \caption{\textbf{Comparison of safety and utility across different base model architectures.} We report Harmful Score (HS) and Fine-tuning Accuracy (FA) on three instruction-tuned, safety-aligned LLMs: LLaMA3-8B-Instruct, Gemma2-9B-it, and Qwen3-4B-Instruct, along with their averages. Overall, our framework exhibits strong robustness across different model architectures.}
        \vspace{-2mm}
        \label{tab:architecture}
        \begin{tabular}{l|cc|cc|cc|cc}
        \toprule
        \multirow{2.5}{*}{Methods} & \multicolumn{2}{c}{LLaMA3-8B-Instruct} & \multicolumn{2}{c}{Gemma2-9B-it} & \multicolumn{2}{c}{Qwen3-4B-Instruct} & \multicolumn{2}{c}{Average} \\
        \cmidrule{2-9}
         & HS $\downarrow$ & FA $\uparrow$ & HS $\downarrow$ & FA $\uparrow$ & HS $\downarrow$ & FA $\uparrow$ & HS $\downarrow$ & FA $\uparrow$ \\
        \midrule
        SFT & 75.2 & 68.4 & 59.1 & 80.0 & 72.2 & 79.6 & 68.8 & 76.0  \\
        LDIFS~\cite{mukhoti2024finetuning} & 17.2 & 73.8 & 4.1 & 81.0 & 71.0 & 79.9 & 30.8 & 78.2  \\
        SafeInstruct~\cite{bianchi2024safety} & 21.5 & 69.8 & 18.5 & 79.9 & 27.3 & 80.0 & 22.4 & 76.6  \\
        Lisa~\cite{huang2024lisa} & 26.8 & 69.1 & 7.7 & 79.7 & 22.6 & {{80.9}} & 19.0 & 76.6  \\
        Security Vector~\cite{zhou2024making} & 21.7 & 72.2 & 6.5 & 81.2 & 16.9 & \textbf{81.1} & 15.0 & 78.2  \\
        AsFT~\cite{yang2026asft} & 26.0 & 68.3 & 5.5 & 77.4 & 23.5 & 58.0 & 18.3 & 67.9  \\
        \cmidrule{1-9}
        SafeLoRA~\cite{hsu2024safe} & 26.0 & 72.8 & 7.8 & 78.8 & 14.2 & 56.8 & 20.7 & 79.4  \\
        Antidote~\cite{huang2025antidote} & 25.3 & 74.7 & 5.3 & 81.0 & 13.0 & 76.5 & 14.5 & 77.4  \\
        Panacea~\cite{wang2026panacea} & 41.0 & 65.5 & 32.0 & 78.7 & 25.9 & 80.3 & 33.0 & 74.8  \\
        \cmidrule{1-9}
        Buffer-and-Reinforce (Ours) & \textbf{8.4} & \textbf{77.5} & \textbf{3.0} & \textbf{81.4} & \textbf{5.5} & 80.0 & \textbf{5.2} & \textbf{{78.3}}  \\
        \bottomrule
        \end{tabular}
\end{table*}


\section{Experiments}
The goal of our experiments is to evaluate our framework on safety-alignment and user-task performance. To this end, we vary the contamination ratio of harmful prompts in user data, the number of user data samples, the target task (GSM8K~\cite{cobbe2021trainingverifierssolvemath}, SST2~\cite{socher2013recursive}, AGNEWS~\cite{zhang2015character}), and the underlying base model (LLaMA3-8B-Instruct~\cite{grattafiori2024llama}, Gemma2-9B-it~\cite{team2024gemma}, Qwen3-4B-Instruct-2507~\cite{yang2025qwen3}). By default, we use LLaMA3-8B-Instruct with 1{,}000 user data samples, a harmful ratio of 0.1, and GSM8K as the benign task, unless stated otherwise.


\paragraph{Datasets.}
Our experiments follow a cross-dataset setting that separates FaaS provider-side data from user-side data.
The provider has access to 5{,}000 harmful queries with harmful and refusal responses from LAT~\cite{sheshadri2024latent} and 5{,}000 benign queries with helpful responses from Alpaca~\cite{alpaca}. 
User data is constructed by mixing BeaverTails~\cite{ji2023beavertails} harmful prompts with user-task samples according to a specified harmful data ratio.



\paragraph{Metrics.} \label{sec:metrics}
Following prior work~\cite{huang2024vaccine,huang2024lisa,huang2025booster,huang2025antidote}, we evaluate fine-tuned models in terms of safety and utility. 
Safety is measured by the Harmful Score (HS), defined as the fraction of outputs on the BeaverTails test set that are identified as harmful by Beaver-Dam-7B~\cite{ji2023beavertails}. 
Utility is measured by Fine-tuning Accuracy (FA) on downstream benchmarks, including GSM8K, SST2, and AGNEWS. 
All metrics are measured after user fine-tuning.

\paragraph{Baselines.}
We assume that FaaS operates on already safety-aligned LLMs. Accordingly, alignment-stage methods that search for alternative safety-aligned weights are difficult to apply to pre-aligned LLMs and are therefore excluded from the main tables; we evaluate them separately in Appendix~\ref{sec:alignment-stage-solutions}. We primarily compare our fine-tuning framework against fine-tuning-stage defenses that can be directly applied to safety-aligned LLMs, including LDIFS~\cite{mukhoti2024finetuning}, SafeInstruct~\cite{bianchi2024safety}, Lisa~\cite{huang2024lisa}, Security Vector~\cite{zhou2024making}, and AsFT~\cite{yang2026asft}, as well as post-fine-tuning-stage approaches such as SafeLoRA~\cite{hsu2024safe}, Antidote~\cite{huang2025antidote}, and Panacea~\cite{wang2026panacea}. We also include standard supervised fine-tuning (SFT) as a reference baseline without defense. Details are provided in Appendix~\ref{sec:baseline details}.

\begin{table*}[!t]
\setlength{\tabcolsep}{3pt}
    \centering
    \scriptsize
        \caption{\textbf{Computational cost comparison across baselines and our framework.} We report GPUTime (total run time) and GPUMemory (peak GPU memory usage), measured on A100 GPU, for each stage: Before Fine-tuning, User Fine-tuning, and Post-fine-tuning. Although our method requires preparing BufferLoRA and ReinforceLoRA, it incurs the same computational cost as SFT during user fine-tuning.}
        \vspace{-2mm}
        \begin{adjustbox}{max width=\textwidth}
        \label{tab:cost}
        \begin{tabular}{l|cc|cc|cc}
        \toprule
        \multirow{2.5}{*}{Methods} & \multicolumn{2}{c}{Before Fine-tuning} & \multicolumn{2}{c}{User Fine-tuning} & \multicolumn{2}{c}{Post-fine-tuning}\\
        \cmidrule{2-7}
         & GPUTime (min) & GPUMemory (GB) & GPUTime (min) & GPUMemory (GB) & GPUTime (min) & GPUMemory (GB) \\
        \midrule
        SFT & 0.00 & 0.00 & 3.93 & 39.11 & 0.00 & 0.00 \\
        LDIFS~\cite{mukhoti2024finetuning} & 0.00 & 0.00 & 7.24 & 55.53 & 0.00 & 0.00 \\
        SafeInstruct~\cite{bianchi2024safety} & 0.00 & 0.00 & 4.21 & 39.47 & 0.00 & 0.00 \\
        Lisa~\cite{huang2024lisa} & 0.00 & 0.00 & 11.42 & 40.86 & 0.00 & 0.00 \\
        Security Vector~\cite{zhou2024making} & 38.29 & 69.93 & 3.93 & 39.11 & 0.00 & 0.00 \\
        AsFT~\cite{yang2026asft} & 1.71 & 14.14 & 202.92 & 119.10 & 0.00 & 0.00 \\
        \cmidrule{1-7}
        SafeLoRA~\cite{hsu2024safe} & 0.00 & 0.00 & 3.93 & 39.11 & 0.89 & 34.03 \\
        Antidote~\cite{huang2025antidote} & 0.00 & 0.00 & 3.93 & 39.11 & 2.11 & 33.16 \\
        Panacea~\cite{wang2026panacea} & 0.00 & 0.00 & 18.19 & 74.97 & $1.92e^{-4}$ & 76.04 \\
        \cmidrule{1-7}
        Buffer-and-Reinforce (Ours) & 30.04 & 41.10 & 3.93 & 39.11 & 0.25 & 26.00 \\
        \bottomrule
        \end{tabular}
        \end{adjustbox}
\end{table*}

\subsection{Experiment Results}
\paragraph{Varying Harmful Ratio.} 
We evaluate the robustness of our Buffer-and-Reinforce fine-tuning framework under varying the proportion of harmful data in the user data by measuring HS and FA. As shown in Table~\ref{tab:harmful_ratio}, increasing the harmful ratio $p$ generally leads to higher HS and lower FA across most methods. All baselines exhibit substantially better safety than naive SFT, indicating the effectiveness of existing defenses to some extent. However, when $p \geq 0.3$, many baselines suffer from a sharp increase in HS, revealing limited robustness under heavily contaminated user data. 
In contrast, our method consistently achieves low HS and high FA even when $p \geq 0.3$. This result demonstrates that our framework can effectively preserve safety and utility even under challenging adversarial settings where users intentionally provide a large fraction of harmful data.

\paragraph{Varying User Data Size.}  
We analyze the effect of user data size on safety and utility by varying the number of user-provided samples. Table~\ref{tab:num_samples} shows that both HS and FA generally increase as the dataset size $n$ grows, although the increase is not strictly proportional. This trend arises because the harmful ratio $p$ is fixed, increasing $n$ also increases the absolute number of harmful samples in the user-provided data. Despite these changes in harmful data scale, our framework yields the lowest HS and the highest FA across all values of $n$. These results demonstrate that our method remains robust to variations in user data size, effectively preserving both safety and downstream task performance.

\paragraph{Diverse Downstream Tasks.}  
To demonstrate that our approach is effective beyond the default GSM8K setting, we further evaluate our framework on a broader range of downstream tasks, including SST2 and AGNEWS. These datasets differ substantially in domain and task characteristics, enabling a comprehensive assessment of our method's generality across both reasoning-oriented and classification-based workloads.
As shown in Table~\ref{tab:dataset}, our method consistently achieves low HS and high FA on both SST2 and AGNEWS. These results indicate that the advantages of our framework are not limited to a specific task or dataset, but consistently generalize across diverse downstream applications.

\paragraph{Diverse Model Architectures.}
While our experiments primarily focus on LLaMA3-8B-Instruct as the default backbone, we further evaluate the effectiveness of our framework across different model architectures and parameter scales. Specifically, we conduct experiments on Gemma2-9B-it and Qwen3-4B-Instruct, which differ in both model size and architectural design. Table~\ref{tab:architecture} shows that our framework exhibits strong safety and utility across these diverse architectures, achieving low HS while maintaining high FA. These results indicate that the benefits of our approach are not tied to a specific backbone model, but generalize across LLMs with varying capacities and architectural characteristics.

\subsection{Computational Cost.}
We compare the computational cost of our framework with baselines by measuring total GPU time and peak GPU memory usage across three stages: {Before Fine-tuning}, {User Fine-tuning}, and {Post-fine-tuning}. 
\textbf{{Before Fine-tuning}} stage corresponds to service preparation steps performed prior to receiving user data. For AsFT, this stage includes identifying safety directions; for Security Vector, it includes training the Security Vector; and for our method, it involves training BufferLoRA and ReinforceLoRA. \textbf{{User Fine-tuning}} stage adapts the LLM to user data, where fine-tuning-stage methods operate, and post-fine-tuning-stage methods typically perform SFT. \textbf{{Post-fine-tuning}} stage includes post-processing after user fine-tuning, where post-fine-tuning-stage methods are applied; for our framework, this contains projecting ReinforceLoRA via QR decomposition and merging it with UserLoRA.

Although our framework requires pre-training BufferLoRA and ReinforceLoRA prior to FaaS deployment, these components are trained only once and can be reused across all user fine-tuning sessions.  As shown in Table~\ref{tab:cost}, our method incurs the same computational cost as SFT during user fine-tuning, achieving substantially lower per-user fine-tuning cost than baselines. 
These results demonstrate that our framework is a practical and efficient solution for FaaS.


\subsection{Analysis}
\paragraph{Ablation Study.} \label{sec: ablation}
Table~\ref{tab:ablation} shows an ablation study to analyze the contributions of the components in our Buffer-and-Reinforce framework: BufferLoRA, ReinforceLoRA, and QR decomposition-based merging strategy (QR).

Comparing Rows 1 and 2 shows that BufferLoRA substantially reduces the HS while improving FA. This is because BufferLoRA saturates safety-degrading gradients during user fine-tuning and mitigates gradient conflicts between harmful updates and downstream task learning, which in turn benefits task performance as well as safety.
Next, comparing Rows 2 and 3 highlights the role of ReinforceLoRA. Incorporating ReinforceLoRA further decreases HS, confirming its effectiveness in reinforcing model safety. However, this comes at the cost of reduced FA, as naively merging ReinforceLoRA with UserLoRA introduces interference that degrades user-task representations.
This trade-off motivates the QR decomposition-based merging strategy. As shown in the last row, integrating only the components of ReinforceLoRA that are orthogonal to the UserLoRA restores and even improves FA, while incurring a modest yet competitive increase in HS.  This result suggests that QR-based orthogonalization reduces interference between safety reinforcement and downstream task learning.

In addition, Rows 4 and 5 examine configurations without BufferLoRA. While ReinforceLoRA alone improves safety, FA remains low because harmful information is encoded into UserLoRA during user fine-tuning. QR-based orthogonalization is also less effective in this setting, since harmful updates already embedded in UserLoRA can be treated as part of the user-task subspace and preserved during merging. Overall, these results show that BufferLoRA, ReinforceLoRA, and QR-based merging are complementary and jointly necessary for achieving strong safety and utility.

\begin{table}[!t]
\centering
        \scriptsize
        \caption{\textbf{Ablation of our fine-tuning framework components.}}
        \label{tab:ablation}
        \begin{tabular}{c|ccc|cc}
        \toprule
        Row & BufferLoRA & ReinforceLoRA & QR & HS $\downarrow$ & FA $\uparrow$ \\
        \midrule
        1 & X & X & X & 75.2 & 68.4  \\
        2 & O & X & X & 18.0 & 75.4 \\
        3 & O & O & X & 4.7 & 74.0  \\
        \cmidrule{1-6}
        4 & X & O & X & 9.6 & 67.3  \\
        5 & X & O & O & 21.8 & 56.6  \\
        \cmidrule{1-6}
        6 & O & O & O & \textbf{8.4} & \textbf{77.5}  \\
        \bottomrule
        \end{tabular}
\end{table}

\begin{table}[!t]
    \centering
    \scriptsize
    \caption{\textbf{Effect of training data size for BufferLoRA and ReinforceLoRA.} Insufficient BufferLoRA training due to limited training data degrades safety after user fine-tuning, highlighting the importance of sufficiently strong jailbreaking of BufferLoRA.}
    \label{tab:sample size for bufferlora and ReinforceLoRA}
    \begin{tabular}{c|c|c|c|c}
        \cmidrule{1-5}
         \multirow{2.5}{*}{\# Samples} & BufferLoRA 
          & ReinforceLoRA 
          & \multicolumn{2}{c}{\makecell{ReinforceLoRA\\+UserLoRA}} \\ 
       \cmidrule{2-5}
          & HS & HS & HS $\downarrow$ & FA $\uparrow$ \\ \cmidrule{1-5}
          100 & 23.8 & 9.6 & 20.8 & 75.7  \\
          500 & 84.1 & 7.1 & 11.0 & 76.1 \\
          1,000 & 89.0 & 6.2 & 10.3 & 76.1 \\
          3,000 & 86.8 & 3.8 & 9.2 & 77.4 \\
          5,000 & 88.2 & 4.3 & 8.4 & 77.5 \\ \cmidrule{1-5}
    \end{tabular}
\end{table}


\paragraph{Effect of Training Data Size for BufferLoRA and ReinforceLoRA.}
We analyze how the number of training data used for BufferLoRA and ReinforceLoRA affects safety and utility after user fine-tuning. Specifically, we vary the number of samples used to train each module and evaluate HS and FA.
As shown in Table~\ref{tab:sample size for bufferlora and ReinforceLoRA}, increasing the training data for both BufferLoRA and ReinforceLoRA reduces HS and improves FA. In particular, training BufferLoRA with a sufficiently large set of harmful samples induces a strongly jailbroken state, enabling more effective and stable saturation of safety-degrading gradients during user fine-tuning. This strong jailbreaking is critical to the effectiveness of our framework.
In contrast, when BufferLoRA is trained with limited harmful data, it fails to fully converge on harmful queries. In this case, even if ReinforceLoRA alone achieves low HS, the overall framework cannot maintain strong safety after user fine-tuning. These results indicate that strong jailbreaking via BufferLoRA is essential, and that ReinforceLoRA alone cannot compensate for insufficient BufferLoRA training.


\section{Limitations}
While the effectiveness of our framework depends on sufficiently strong jailbreaking of the base LLM, our framework relies on the ability to induce a temporarily jailbroken state through BufferLoRA. 
Although our experiments demonstrate its effectiveness across multiple models, this assumption may become less reliable as future models become more robust to jailbreaking 
In such cases, harmful loss saturation may be weaker, potentially reducing the buffering effect. 
Developing stronger or adaptive BufferLoRA training strategies is an important direction for future work.

We also evaluate our framework on models in the 4B-13B parameter range. 
While this covers several widely used open-weight LLMs, practical FaaS systems may involve substantially larger frontier models. Due to computational costs and limited access to frontier-scale model weights, we could not directly verify whether the same behavior holds at larger scales. 
Future work should examine the scalability of our framework on larger and more robust models.

\section{Conclusion}
In this paper, we introduced a new perspective on defending against harmful fine-tuning attacks in Fine-tuning-as-a-Service (FaaS). Unlike prior approaches that explicitly suppress harmful updates through regularization, we showed that a temporarily jailbroken LLM naturally neutralizes safety-degrading gradients via harmful loss convergence, while preserving the ability to learn benign, user-task knowledge.
Based on this observation, we proposed a Buffer-and-Reinforce fine-tuning framework that leverages a removable BufferLoRA to saturate harmful gradients during user fine-tuning, a ReinforceLoRA to reinforce safety, and a QR decomposition-based merging strategy to resolve conflicts between safety reinforcement and user-task learning. Extensive experiments across varying harmful data ratios, user data sizes, downstream tasks, and model architectures demonstrated that our framework consistently achieves strong safety and high utility, outperforming existing fine-tuning-stage and post-fine-tuning-stage defenses. Notably, during user fine-tuning, our method requires no additional safety-alignment data and incurs the same computational cost as SFT, making it practical solution for real-world FaaS.

\section*{Impact Statement}
This work aims to improve the safety of personalized LLMs in Fine-tuning-as-a-Service (FaaS). By buffering harmful updates during user fine-tuning and reinforcing safety after adaptation, our framework can help providers customize models while reducing safety degradation caused by malicious or contaminated user data. This supports safer deployment of LLM personalization services in practical settings.

However, our framework intentionally relies on a temporarily jailbroken state. Although BufferLoRA is frozen during user fine-tuning and removed before deployment, improper access to such modules could introduce misuse risks. Therefore, BufferLoRA should remain under secure provider-side control, with appropriate access restrictions. Overall, this work contributes to safer LLM personalization while emphasizing the need for careful governance of defense mechanisms based on controlled harmful-behavior activation.

\section*{Acknowledgements}
This work was supported by Institute of Information \& Communication Technology Planning \& Evaluation (IITP) grant funded by the Korea government (MSIT) (No. RS-2025-02215344, Development of AI Technology with Robust and Flexible Resilience Against Risk Factors).

\bibliography{main}
\bibliographystyle{icml2026}

\newpage
\appendix

\setcounter{table}{0}
\setcounter{figure}{0}
\renewcommand{\thetable}{A\arabic{table}}
\renewcommand{\thefigure}{A\arabic{figure}}

\section{Experimental Details}

\subsection{Training Setup}
Across all experiments, we use a unified LoRA~\cite{hu2022lora} configuration with rank 32 and scaling factor $\alpha=64$, and apply LoRA to the \texttt{q\_proj}, \texttt{v\_proj}, \texttt{gate\_proj}, \texttt{up\_proj}, and \texttt{down\_proj} modules. For LLaMA3-8B-Instruct~\cite{grattafiori2024llama}, we use a learning rate of $1\times10^{-5}$ and train for 3 epochs on GSM8K~\cite{cobbe2021trainingverifierssolvemath} and AGNEWS~\cite{zhang2015character}, while a higher learning rate of $1\times10^{-4}$ is used for SST2~\cite{socher2013recursive}. For Gemma2-9B-it~\cite{team2024gemma}, we adopt different learning rates for different LoRA modules to ensure effective jailbreaking and stable fine-tuning: BufferLoRA and ReinforceLoRA are trained with a learning rate of $5\times10^{-5}$ for 3 epochs, while UserLoRA is trained with a smaller learning rate of $5\times10^{-6}$ for 3 epochs. For Qwen3-4B-Instruct-2507~\cite{yang2025qwen3}, we use a learning rate of $5\times10^{-5}$ and train for 2 epochs. All experiments are conducted on four RTX 3090 GPUs, except for experiments measuring computational cost. For evaluation, we follow prior works~\cite{huang2025antidote,huang2024lisa,huang2025booster,huang2024vaccine} and report results on 1,000 samples from the GSM8K test set, 872 samples from the SST2 validation set, and 1,000 samples from the AGNEWS test set.

\begin{table*}[t]
    \centering
    \caption{\textbf{Hyperparameter search for baseline methods.}
    To ensure fair comparison and assess the sensitivity of each baseline to its hyperparameters, we empirically explore key hyperparameters for all baseline methods. The optimal configurations, selected based on the best trade-off between safety and utility, are highlighted in \textbf{bold} and used in the main experiments.}
    \label{tab:hyperparameters}
    \begin{subtable}[t]{0.32\textwidth}
        \centering
        \caption{LDIFS}
        \vspace{-2mm}
        \label{tab:LDIFS_hyperparameters}
            \begin{tabular}{ccc}
                \toprule
                $\rho$ & HS $\downarrow$ & FA $\uparrow$ \\
                \midrule
                0.001 & 18.3 & 74.3 \\
                \textbf{0.01} & \textbf{17.2} & \textbf{73.8} \\
                0.1 & 29.7 & 73.9 \\
                1.0 & 74.6 & 69.1 \\
                10.0 & 71.0 & 69.4 \\
                \bottomrule
            \end{tabular}
    \end{subtable}
    \hfill 
    \begin{subtable}[t]{0.32\textwidth}
        \centering
        \caption{Lisa}
        \vspace{-2mm}
        \label{tab:lisa_hyperparameters}
            \begin{tabular}{ccc}
                \toprule
                $\rho$ & HS $\downarrow$ & FA $\uparrow$ \\
                \midrule
                0.01 & 73.6 & 66.9 \\
                0.1 & 73.3 & 67.4 \\
                1.0 & 55.8 & 69.1 \\
                \textbf{10.0} & \textbf{26.8} & \textbf{69.1} \\
                20.0 & 28.9 & 70.8 \\
                \bottomrule
            \end{tabular}
    \end{subtable}
    \hfill
    \begin{subtable}[t]{0.32\textwidth}
        \centering
        \caption{AsFT}
        \vspace{-2mm}
        \label{tab:asft_hyperparameters}
        \begin{tabular}{ccc}
                \toprule
                $\lambda$ & HS $\downarrow$ & FA $\uparrow$ \\
                \midrule
                1.0 & 38.0 & 63.5 \\
                2.0 & 33.5 & 65.5 \\
                5.0 & 29.7 & 67.1 \\
                \textbf{10.0} & \textbf{26.0} & \textbf{68.3} \\
                20.0 & 25.1 & 67.8 \\
                \bottomrule
        \end{tabular}
    \end{subtable}
    
    \vspace{0.3cm} 
    
    \begin{subtable}[t]{0.32\textwidth}
        \centering
        \caption{SafeLoRA}
        \vspace{-2mm}
        \label{tab:safelora_hyperparameter}
        \begin{tabular}{ccc}
                \toprule
                Threshold & HS $\downarrow$ & FA $\uparrow$ \\
                \midrule
                0.5 & 29.9 & 73.0 \\
                \textbf{0.6} & \textbf{26.0} & \textbf{72.8} \\
                0.7 & 21.6 & 69.3 \\
                \bottomrule
        \end{tabular}
    \end{subtable}
    \hfill
    \begin{subtable}[t]{0.32\textwidth}
        \centering
        \caption{Antidote}
        \vspace{-2mm}
        \label{tab:antidote_hyperparameter}
        \begin{tabular}{ccc}
                \toprule
                Dense Ratio & HS $\downarrow$ & FA $\uparrow$ \\
                \midrule
                0.01 & 20.6 & 72.1 \\
                \textbf{0.05} & \textbf{25.3} & \textbf{74.7} \\
                0.1 & 40.2 & 72.4 \\
                \bottomrule
        \end{tabular}
    \end{subtable}
    \hfill
    \begin{subtable}[t]{0.32\textwidth}
        \centering
        \caption{Panacea}
        \vspace{-2mm}
        \label{tab:panacea_hyperparameters}
        \begin{tabular}{cccc}
                \toprule
                $\epsilon_\rho$ & $\lambda$ & HS $\downarrow$ & FA $\uparrow$ \\
                \midrule
                1.0 & 0.001 & 43.8 & 64.4 \\
                2.0 & 0.001 & 48.3 & 37.3 \\
                \textbf{1.0} & \textbf{0.0001} & \textbf{41.0} & \textbf{65.5} \\
                \bottomrule
        \end{tabular}
    \end{subtable}
\end{table*}

\subsection{Details on Baselines}
\label{sec:baseline details}
Prior work on defending harmful fine-tuning attacks in Fine-tuning-as-a-Service (FaaS) spans alignment-stage, fine-tuning-stage, and post-fine-tuning-stage methods, depending on when the defense is applied.
Following the experimental setting described in the main manuscript, we focus on fine-tuning-stage and post-fine-tuning-stage methods that are directly applicable to already safety-aligned LLMs. Below, we provide detailed descriptions and implementation settings for all evaluated baselines. All hyperparameter search results are reported in Table~\ref{tab:hyperparameters}.

\paragraph{SFT (Supervised Fine-Tuning)}
\begin{itemize}[itemsep=0pt, leftmargin=10pt, topsep=-4pt]
    \item \textbf{User Fine-tuning:} Applying standard supervised fine-tuning to the safety-aligned LLM using user-provided data.
\end{itemize}

\paragraph{LDIFS~\citep{mukhoti2024finetuning}}
\begin{itemize}[itemsep=0pt, leftmargin=10pt, topsep=-4pt]
    \item \textbf{User Fine-tuning:} Mitigating harmful updates through a regularization term that penalizes deviations between the representations of the original and fine-tuned models; we set $\lambda = 0.01$, which yielded the best performance in our experimental setting.
\end{itemize}

\paragraph{SafeInstruct~\citep{bianchi2024safety}}
\begin{itemize}[itemsep=0pt, leftmargin=10pt, topsep=-4pt]
    \item \textbf{User Fine-tuning:} Fine-tuning a safety-aligned LLM using a combined dataset consisting of user data and additional safety-alignment data samples. Following prior work~\citep{bianchi2024safety}, when the user dataset contains 1,000 samples, we augment it with 100 safety-alignment data samples (10\%).
\end{itemize}

\paragraph{Lisa~\citep{huang2024lisa}}
\begin{itemize}[itemsep=0pt, leftmargin=10pt, topsep=-4pt]
    \item \textbf{User Fine-tuning:} Alternates optimization steps between safety-alignment data and user data with a regularization objective. We set the regularization strength to $\rho=10$ and adopt a 10/90 split between alignment and fine-tuning steps, which achieves the best performance in our experimental setting.
\end{itemize}

\paragraph{Security Vector~\citep{zhou2024making}}
\begin{itemize}[itemsep=0pt, leftmargin=10pt, topsep=-4pt]
    \item \textbf{Before Fine-tuning:} Training separable Security Vector on a small set of harmful samples, which make the model exhibit harmful behavior when activated. Following the original paper, we use a KL loss with weight $0.01$ to preserve the model's behavior on benign data.
    \item \textbf{User Fine-tuning:} Activating the security vectors during supervised fine-tuning on user-provided data.
\end{itemize}

\paragraph{AsFT~\cite{yang2026asft}}
\begin{itemize}[itemsep=0pt, leftmargin=10pt, topsep=-4pt]
    \item \textbf{Before Fine-tuning:} Extracting safety directions by computing the difference between a safety-aligned LLM and its corresponding base model, following SafeLoRA~\cite{hsu2024safe}.
    \item \textbf{User Fine-tuning:} Defining harmful directions as those orthogonal to the extracted safety directions and suppressing updates along these directions via regularization. We set the regularization strength to $\lambda=10$, which yields the best performance in our experimental setting.
\end{itemize}

\paragraph{SafeLoRA~\citep{hsu2024safe}}
\begin{itemize}[itemsep=0pt, leftmargin=10pt, topsep=-4pt]
    \item \textbf{User Fine-tuning:} Applying standard supervised fine-tuning to a safety-aligned LLM using user-provided data. (equal to SFT)
    \item \textbf{Post-fine-tuning:} Obtaining safety directions from the parameter difference between a safety-aligned LLM and the corresponding base model, and projecting the fine-tuned LLM onto these safety directions to restore safety. The original weights are replaced with their projected weights when the cosine similarity between them falls below a threshold, which we set to $0.6$ based on our experiments.
\end{itemize}

\paragraph{Antidote~\citep{huang2025antidote}}
\begin{itemize}[itemsep=0pt, leftmargin=10pt, topsep=-4pt]
    \item \textbf{User Fine-tuning:} Applying standard supervised fine-tuning to a safety-aligned LLM using user-provided data. (equal to SFT)
    \item \textbf{Post-fine-tuning:} Applying neuron masking based on Wanda scoring~\cite{sun2024simple} to suppress harmful neurons, using a masking ratio of $\alpha=0.05$ that yields the best performance in our setting.
\end{itemize}

\paragraph{Panacea~\citep{wang2026panacea}}
\begin{itemize}[itemsep=0pt, leftmargin=10pt, topsep=-4pt]
    \item \textbf{User Fine-tuning:} Optimizing an adaptive, norm-bounded parameter perturbation to increase harmful loss, where the perturbation bound is set to $\epsilon_{\rho}=1$, which yields the strongest safety in our experiments.
    \item \textbf{Post-fine-tuning:} Applying the optimized perturbation to the fine-tuned model to restore safety-alignment while preserving downstream task performance; we set the trade-off parameter to $\lambda=10^{-4}$ based on our experiments.
\end{itemize}

\begin{table*}[t]
\centering
\small
\caption{\textbf{Hyperparameter sensitivity of BufferLoRA, ReinforceLoRA, and UserLoRA.}}
\label{tab:hyperparameter_sensitivity}
\resizebox{\linewidth}{!}{
\begin{tabular}{lcccccccccc}
\toprule
\multirow{2}{*}{{Module}} 
& \multicolumn{2}{c}{{BufferLoRA}} 
& \multicolumn{2}{c}{{ReinforceLoRA}} 
& \multicolumn{2}{c}{{UserLoRA}} 
& {{BufferLoRA}} 
& {{ReinforceLoRA}} 
& \multicolumn{2}{c}{{Final}} 
\\
& {Data Size} & {Epochs}
& {Data Size} & {Epochs}
& {Data Size} & {Epochs}
& HS & HS & HS & FA \\
\midrule
Default 
& 5K & 3 & 5K & 3 & 1K & 3 & 88.2 & 4.3 & 8.4 & 77.5 \\

\midrule
\multirow{4}{*}{BufferLoRA} 
& 5K  & 1  & 5K & 3 & 1K & 3 & 87.9 & 3.7 & 10.6 & 76.6 \\
& 5K  & 10 & 5K & 3 & 1K & 3 & 87.5 & 6.2 & 11.7 & 69.2 \\
& 3K  & 3  & 5K & 3 & 1K & 3 & 86.8 & 3.3 & 7.8  & 77.1 \\
& 10K & 3  & 5K & 3 & 1K & 3 & 87.4 & 3.9 & 8.2  & 76.0 \\

\midrule
\multirow{4}{*}{ReinforceLoRA} 
& 5K & 3 & 5K  & 1  & 1K & 3 & 88.2 & 4.9 & 8.2 & 74.7 \\
& 5K & 3 & 5K  & 10 & 1K & 3 & 88.2 & 4.6 & 9.9 & 75.2 \\
& 5K & 3 & 3K  & 3  & 1K & 3 & 88.2 & 4.9 & 7.4 & 74.8 \\
& 5K & 3 & 10K & 3  & 1K & 3 & 88.2 & 3.9 & 8.7 & 76.6 \\

\midrule
\multirow{2}{*}{UserLoRA} 
& 5K & 3 & 5K & 3 & 1K & 1  & 88.2 & 4.3 & 8.5 & 76.9 \\
& 5K & 3 & 5K & 3 & 1K & 10 & 88.2 & 4.3 & 9.3 & 77.9 \\
\bottomrule
\end{tabular}
}
\end{table*}

\subsection{Hyperparameter Configuration and Sensitivity Analysis}
\label{sec:hyperparameter_sensitivity}

We analyze the sensitivity of our framework to the training data size and number of epochs for BufferLoRA, ReinforceLoRA, and UserLoRA. 
Table~\ref{tab:hyperparameter_sensitivity} reports the Harmful Score (HS) and Fine-tuning Accuracy (FA) results under several settings. 
By default, we train BufferLoRA on 5,000 harmful samples for 3 epochs, ReinforceLoRA on 5,000 safety-alignment samples consisting of 2,500 harmful and 2,500 benign samples for 3 epochs, and UserLoRA on 1,000 user samples for 3 epochs.

Overall, the framework is robust to moderate hyperparameter changes. 
Changing the data size of BufferLoRA or ReinforceLoRA from 5,000 to 3,000 or 10,000 causes only small changes in the final harmful score and fine-tuning accuracy. 
Also, changing the number of epochs from 1 to 10 also has limited impact, suggesting that the proposed safety mechanism remains effective across different degrees of user adaptation. 
We analyze the effect of the user data size on UserLoRA in Table~\ref{tab:num_samples}.

In practice, BufferLoRA and ReinforceLoRA should be configured by monitoring harmful score, since they directly control temporary jailbreaking and safety reinforcement. 
Whereas, UserLoRA is task-dependent, and its configuration should be selected according to the type and amount of user data.

\section{Additional Analysis and Experiments}
\subsection{Equivalence of Projection Subspaces for LoRA} \label{sec: Equivalence of Projection Subspaces for LoRA}
For soft orthogonalization in the post-fine-tuning stage, we require an efficient method
to project the ReinforceLoRA $W_R$ onto the orthogonal complement of the UserLoRA
output subspace. Rather than operating directly on the full matrix
$W_U \in \mathbb{R}^{d_{\text{out}} \times d_{\text{in}}}$, we exploit the low-rank
structure of LoRA to significantly reduce computational complexity. Below, we
show that orthogonal projection with respect to $W_U$ can be equivalently performed
using only its decoder matrix $B_U$.

\begin{proposition} \label{proposition 1}
\textbf{(Equivalence of Projection Subspaces)}
Let the UserLoRA  be defined as $W_U = B_U A_U$, where
$B_U \in \mathbb{R}^{d_{\text{out}} \times r}$ and
$A_U \in \mathbb{R}^{r \times d_{\text{in}}}$ with
$r \ll \min(d_{\text{out}}, d_{\text{in}})$.
Assuming that $A_U$ has full row rank, the orthogonal projection onto the column space of
$W_U$ is equivalent to the orthogonal projection onto the column space of $B_U$.
\end{proposition}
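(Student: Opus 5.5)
The plan is to show directly that $\mathrm{col}(W_U) = \mathrm{col}(B_U)$. Orthogonal projection onto a subspace depends only on the subspace, so once the column spaces coincide the two projections coincide, and the Proposition follows. The argument is two inclusions plus a short remark tying it to the QR-based projector used in the merging step.

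First, the inclusion $\mathrm{col}(W_U) \subseteq \mathrm{col}(B_U)$ holds for any $A_U$. Every column of $W_U = B_U A_U$ is $B_U$ times a column of $A_U$, so it is a linear combination of columns of $B_U$.

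Second, for the reverse inclusion we use the hypothesis that $A_U$ has full row rank $r$. Then $A_U A_U^\top \in \mathbb{R}^{r\times r}$ is invertible, so $A_U$ has a right inverse $A_U^{+} = A_U^\top (A_U A_U^\top)^{-1}$ with $A_U A_U^{+} = I_r$. This gives $B_U = B_U A_U A_U^{+} = W_U A_U^{+}$, so every column of $B_U$ lies in $\mathrm{col}(W_U)$. An equivalent route is to observe that $A_U : \mathbb{R}^{d_{\text{in}}} \to \mathbb{R}^r$ is surjective, so $\{B_U A_U x\} = \{B_U y : y\in\mathbb{R}^r\}$. Combining both inclusions gives equality of the column spaces.

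Third, we connect this to the construction in the merging step. Let $P_W$ and $P_B$ be the orthogonal projectors onto $\mathrm{col}(W_U)$ and $\mathrm{col}(B_U)$. Each is the unique symmetric idempotent matrix with that range, so $P_W = P_B$. If $B_U = Q_B R$ is a thin QR decomposition, the columns of $Q_B$ form an orthonormal basis of $\mathrm{col}(B_U)$, so $P_B = Q_B Q_B^\top$ and therefore $P_W = Q_B Q_B^\top$. This justifies computing $(I-\alpha Q_B Q_B^\top)W_R$ from $B_U$ alone, without ever forming $W_U$. If $B_U$ is rank-deficient, $Q_B$ should be taken as an orthonormal basis of its range, for instance via pivoted QR, and the same argument goes through.

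No step is genuinely hard. The only point needing care is that the reverse inclusion really uses the full-row-rank assumption on $A_U$ and fails without it, which is why the main text handles rank collapse separately. The proof should therefore state that dependence explicitly.
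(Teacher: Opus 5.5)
Your proof is correct and follows the paper's argument. You use the same two inclusions, with the reverse one resting on surjectivity of $A_U$; your right inverse $A_U^\top(A_U A_U^\top)^{-1}$ is just an explicit witness of that surjectivity. You then make the same remark as the paper, that equal column spaces give equal projectors, computed as $Q_BQ_B^\top$ from a QR factorization of $B_U$.

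Your pivoted-QR caveat for rank-deficient $B_U$ is a sensible refinement the paper omits. Your closing claim that the reverse inclusion fails without full row rank is slightly too strong. Equality still holds whenever $\mathrm{col}(A_U)+\ker B_U=\mathbb{R}^r$, so the hypothesis is sufficient rather than necessary.
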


\begin{proof}
The column space of a matrix $W_U$, denoted by $\mathcal{C}(W_U)$, is defined as
\[
\mathcal{C}(W_U) = \{ W_U v \mid v \in \mathbb{R}^{d_{\text{in}}} \}.
\]

\paragraph{Inclusion ($\subseteq$).}
For any vector $y \in \mathcal{C}(W_U)$, there exists
$x \in \mathbb{R}^{d_{\text{in}}}$ such that
\[
y = W_U x = (B_U A_U)x = B_U(A_U x).
\]
Let $z = A_U x$. Then $y = B_U z$, which implies $y \in \mathcal{C}(B_U)$.
Hence,
\[
\mathcal{C}(W_U) \subseteq \mathcal{C}(B_U).
\]

\paragraph{Reverse inclusion ($\supseteq$).}
Since $A_U \in \mathbb{R}^{r \times d_{\text{in}}}$ has full row rank, the linear map
$f(x) = A_U x$ is surjective onto $\mathbb{R}^r$. Therefore, for any
$z \in \mathbb{R}^r$, there exists $x \in \mathbb{R}^{d_{\text{in}}}$ such that
$A_U x = z$. Consequently, any vector $B_U z \in \mathcal{C}(B_U)$ can be written as
\[
    B_U z = B_U(A_U x) = W_U x,
\]
which implies
\[
\mathcal{C}(B_U) \subseteq \mathcal{C}(W_U).
\]

Combining both inclusions yields
\[
\mathcal{C}(W_U) = \mathcal{C}(B_U).
\]
\end{proof}

The equivalence in Proposition~\ref{proposition 1} relies on the assumption that $A_U$
has full row rank. In practice, we empirically verify that this assumption holds for
trained UserLoRA. Specifically, we measure the effective rank of $W_U$ and observe that
it is equal to the full rank $r$ across layers, indicating that UserLoRA
utilizes its full representational capacity without rank collapse. As a result, the
approximation $\mathcal{C}(W_U) \approx \mathcal{C}(B_U)$ is well justified in practice,
supporting the use of projection based on $B_U$. 

Since the column spaces of $W_U$ and $B_U$ are identical, their corresponding orthogonal
projection operators are also identical. Let $P_{W_U}$ and $P_{B_U}$ denote the orthogonal
projectors onto $\mathcal{C}(W_U)$ and $\mathcal{C}(B_U)$, respectively. Then, $P_{W_U} = P_{B_U}$.

As a result, projecting onto the column space of $W_U$ can be performed using only $B_U$.
In practice, we compute this projector via the QR decomposition $B_U = QR$, yielding $P_{B_U} = QQ^\top$.

\subsection{Equivalence of Principal Subspace Restriction}
\label{sec: Equivalence of Principal Subspace Restriction}

In the post-fine-tuning stage, we remove ReinforceLoRA components that interfere with the user task by projecting them onto the orthogonal complement of the output subspace induced by UserLoRA. 
While Section~\ref{sec: Equivalence of Projection Subspaces for LoRA} assumes that UserLoRA is full rank, this assumption may be violated due to rank collapse in rare cases. In such cases, directly operating with the nominal rank can lead to excessive removal of ReinforceLoRA components. To address this issue, we identify an effective UserLoRA subspace by performing an eigen-decomposition of the Gram matrix $G = A_U A_U^\top$ and restricting the projection to this subspace. This section provides a formal justification that this eigenvector-based restriction recovers the task-relevant output subspace induced by UserLoRA.

\begin{proposition}
\textbf{(Equivalence to Principal Subspace Restriction)}
Let the UserLoRA be $W_U = B_U A_U$, where
$B_U \in \mathbb{R}^{d_{\text{out}} \times r}$ and
$A_U \in \mathbb{R}^{r \times d_{\text{in}}}$.
Let $G = A_U A_U^\top$ be the Gram matrix of the input factor.
Let $\tilde{A}_U$ denote the rank-$k$ truncation of $A_U$ obtained by singular value
decomposition, and define the induced weight
$\tilde{W}_U = B_U \tilde{A}_U$.
Restricting $B_U$ to the subspace spanned by the top-$k$ eigenvectors of $G$ yields a
basis whose span equals the column space of $\tilde{W}_U$, up to degenerate
(zero singular value) directions.
\end{proposition}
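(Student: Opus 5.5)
The plan is to write everything through the singular value decomposition of $A_U$ and to see that the eigenvectors of $G$ are exactly its left singular vectors. Write $A_U = U \Sigma V^\top$ with $U \in \mathbb{R}^{r \times r}$ orthogonal, $\Sigma = \mathrm{diag}(\sigma_1,\dots,\sigma_r)$ with $\sigma_1 \ge \dots \ge \sigma_r \ge 0$, and $V \in \mathbb{R}^{d_{\text{in}} \times r}$ having orthonormal columns. Then
\[
G = A_U A_U^\top = U \Sigma^2 U^\top .
\]
So the eigenpairs of $G$ are $(\sigma_i^2, u_i)$, and the top-$k$ eigenvectors $V_{\text{eff}} = [v_1,\dots,v_k]$ coincide with $U_k = [u_1,\dots,u_k]$. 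When eigenvalues are repeated, the eigenvectors are unique only up to rotation within each eigenspace, which does not change the spans involved. I also assume the threshold $\tau$ separates $\sigma_k^2$ from $\sigma_{k+1}^2$, so that the top-$k$ eigenspace is well defined.

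Next, express the truncation. The rank-$k$ truncation is $\tilde A_U = U_k \Sigma_k V_k^\top$, hence
\[
\tilde W_U = B_U U_k \Sigma_k V_k^\top = \hat B_U \Sigma_k V_k^\top, \qquad \hat B_U = B_U V_{\text{eff}} = B_U U_k .
\]
The inclusion $\mathcal{C}(\tilde W_U) \subseteq \mathcal{C}(\hat B_U)$ follows directly, by the same argument as the forward inclusion in Proposition~\ref{proposition 1}.

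For the reverse inclusion I reuse the surjectivity argument of Proposition~\ref{proposition 1} applied to the factor $\Sigma_k V_k^\top \in \mathbb{R}^{k \times d_{\text{in}}}$. This factor has full row rank exactly when $\sigma_1,\dots,\sigma_k > 0$. The eigenvalue threshold $\lambda_i > \tau \max_j \lambda_j$ with $\tau \ge 0$ guarantees this, provided $A_U \neq 0$. Then for any $z \in \mathbb{R}^k$ there is an $x$ with $\Sigma_k V_k^\top x = z$, so $\hat B_U z = \tilde W_U x$, and the two column spaces coincide. The QR factor $Q_B$ of $\hat B_U$ spans the same space, so the projector $Q_B Q_B^\top$ equals the orthogonal projector onto $\mathcal{C}(\tilde W_U)$.

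The main obstacle is making the phrase ``up to degenerate directions'' precise. If some selected $\sigma_i$ were zero, which happens only if $\tau = 0$ or $k$ is chosen beyond the rank, then $\mathcal{C}(\tilde W_U)$ is the span of $B_U u_i$ over only those $i$ with $\sigma_i > 0$, which can be strictly smaller than $\mathcal{C}(\hat B_U)$. I would state the equality under $\sigma_k > 0$, and otherwise identify the discrepancy as exactly the span of $\{B_U u_i : i \le k,\ \sigma_i = 0\}$. That is the caveat the proposition describes.
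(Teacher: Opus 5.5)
Your proposal is correct and follows essentially the same route as the paper. Both arguments take the SVD of $A_U$, identify the top-$k$ eigenvectors of $G$ with the top-$k$ left singular vectors, and factor $\tilde W_U = B_U U_{A,k}\Sigma_{A,k}V_{A,k}^\top$; your surjectivity argument for $\Sigma_k V_k^\top$ simply makes explicit the ``up to zero singular value directions'' step that the paper only asserts.

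One small slip: the threshold $\lambda_i > \tau \max_j \lambda_j$ is a strict inequality, so $\tau = 0$ still excludes zero eigenvalues. It also never splits a repeated eigenvalue, so your gap assumption holds automatically. The degenerate case therefore arises only when $k$ is fixed independently of the threshold and exceeds $\mathrm{rank}(A_U)$.
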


\begin{proof}
We begin with the Singular Value Decomposition (SVD) of the input factor:
\begin{equation}
A_U = U_A \Sigma_A V_A^\top ,
\end{equation}
where $U_A \in \mathbb{R}^{r \times r}$ contains the left singular vectors and
$\Sigma_A = \mathrm{diag}(\sigma_i)$ the singular values.

The Gram matrix can be written as
\begin{equation}
G = A_U A_U^\top = U_A \Sigma_A^2 U_A^\top .
\end{equation}
Since $G$ is symmetric positive semi-definite, its eigenvectors span the same subspaces
as the left singular vectors of $A_U$, and its eigenvalues satisfy
$\lambda_i = \sigma_i^2$.

Let $U_{A,k}$ denote the matrix of the top-$k$ left singular vectors of $A_U$. 
Selecting the top-$k$ eigenvectors of $G$ therefore yields a matrix
$V_{\mathrm{eff}}$ whose columns span the same subspace as $U_{A,k}$ (up to orthogonal
transformations within degenerate eigenspaces).

Consider the rank-$k$ truncation of $A_U$,
\begin{equation}
\tilde{A}_U = U_{A,k} \Sigma_{A,k} V_{A,k}^\top ,
\end{equation}
which induces the effective update
\begin{equation}
\tilde{W}_U = B_U \tilde{A}_U.
\end{equation}
The column space of this update is given by
\begin{equation}
\mathcal{C}(\tilde{W}_U)
= \mathcal{C}(B_U U_{A,k}),
\end{equation}
up to directions corresponding to zero singular values.

Defining $\hat{B} = B_U V_{\mathrm{eff}}$, and using the fact that
\begin{equation}
    \mathcal{C}(V_{\mathrm{eff}}) = \mathcal{C}(U_{A,k}), 
\end{equation}
we obtain
\begin{equation}
\mathcal{C}(\hat{B})
= \mathcal{C}(B_U V_{\mathrm{eff}})
= \mathcal{C}(B_U U_{A,k})
= \mathcal{C}(\tilde{W}_U).
\end{equation}

Applying QR decomposition to $\hat{B}$ therefore yields an orthonormal basis
$Q_B$ that spans exactly the output subspace induced by the dominant components of 
UserLoRA.
\end{proof}

The proposition shows that the eigenvector-based restriction used in our method
recovers the task-relevant output subspace of UserLoRA under rank collapse cases.
As a result, projecting ReinforceLoRA onto the orthogonal complement of $Q_B$ removes
interference strictly with respect to the user's effective task trajectory, while
avoiding unnecessary suppression caused by collapsed or noisy dimensions.

\begin{figure}[!t]
    \centering
    \includegraphics[width=.85\linewidth]{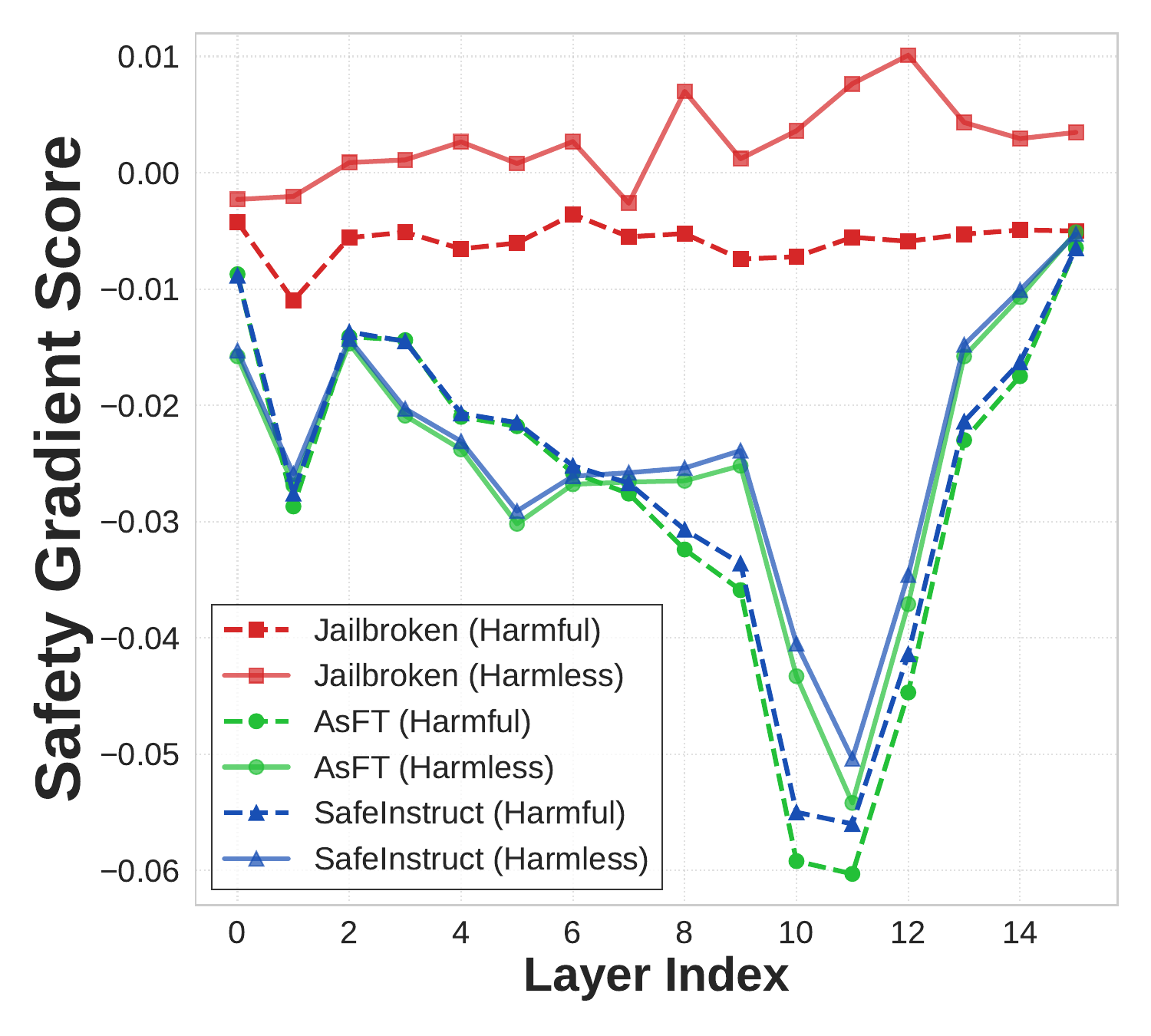}
    \caption{\textbf{Comparison of Safety Gradient Scores for prior defense methods and a jailbroken LLM.} The plot demonstrates that, despite defensive mechanisms, both SafeInstruct and AsFT exhibit negative Safety Gradient Scores across layers on harmful and harmless data. This indicates their limited effectiveness in mitigating safety-degrading gradient updates during fine-tuning.}
    \label{fig:observation_appendix}
\end{figure}

\subsection{Limited Effectiveness in Mitigating Harmful Gradients of Prior Work}
Beyond analyzing the Safety Gradient Score of safety-aligned LLMs, we further examine whether prior defense methods effectively mitigate safety-degrading gradients under fine-tuning on harmful~\cite{ji2023beavertails} and harmless data~\cite{cobbe2021trainingverifierssolvemath}. To this end, we compare the Safety Gradient Scores of safety-aligned LLMs with prior defenses against those of a jailbroken LLM. Notably, the Safety Gradient Score is measured without training and depends only on the initial model parameters and the loss function. As a result, methods that alternate between user fine-tuning and alignment tuning, such as LDIFS~\cite{mukhoti2024finetuning} and Lisa~\cite{huang2024lisa}, are not compatible with this analysis. We therefore focus on methods with fixed training objectives, specifically SafeInstruct~\cite{bianchi2024safety} and AsFT~\cite{yang2026asft}.

Figure~\ref{fig:observation_appendix} shows that safety-related gradients from these prior methods exhibit negative Safety Gradient Scores on both harmful and harmless data, similar to those observed for standard supervised fine-tuning on safety-aligned LLMs. This indicates that, despite their defensive mechanisms, these approaches do not sufficiently reduce safety-degrading gradients during fine-tuning.
These results highlight a limitation of prior methods in directly mitigating safety-degrading gradients. While such approaches may reduce harmful outputs at the response level, they appear less effective at addressing safety degradation at the gradient level, which is critical for robustness against harmful fine-tuning attacks.

\subsection{Effect of Strength of QR Decomposition Projection}
We investigate the effect of the projection strength hyperparameter $\alpha$, which controls the degree of orthogonalization applied to ReinforceLoRA and the trade-off between model safety and downstream utility. Specifically, $\alpha$ determines how aggressively ReinforceLoRA components aligned with the user-task subspace are suppressed. Table~\ref{tab:alpha} reports the HS and FA under different values of $\alpha$. When $\alpha = 0$, ReinforceLoRA is directly added to UserLoRA without projection. While this naive merging achieves the lowest HS, it substantially degrades task performance,
indicating strong interference between safety and user-task learning. Introducing a mild projection with $\alpha = 0.1$ effectively preserves task performance, with only a slight increase in HS. Interestingly, increasing $\alpha$ further does not yield additional gains in utility and results in unstable safety behavior. These results suggest that a modest degree of projection is sufficient to mitigate interference without compromising core safety-alignment. Based on this trade-off, we set $\alpha = 0.1$ for all our experiments.

\begin{table}[t!]
        \centering
        \small
        \caption{\textbf{Effect of projection strength $\alpha$ on safety and utility.} $\alpha=0.1$ provides the best trade-off between HS and FA.}
        \label{tab:alpha}
        \begin{tabular}{l|cc}
        \toprule
        $\alpha$ & HS $\downarrow$ & FA $\uparrow$ \\
        \midrule
        0.0 & 4.7 & 74.0  \\
        \textbf{0.1} &\textbf{8.4} & \textbf{77.5}  \\
        0.3 & 9.5 & 75.8 \\ 
        0.5 & 8.7 & 77.2  \\
        1.0 & 8.7 & 76.2  \\
        \bottomrule
        \end{tabular}
        \vspace{-2mm}
\end{table}

\begin{table}[!t]
    \centering
    \small
    \caption{\textbf{Different Harmful Datasets for BufferLoRA and ReinforceLoRA.}}
    \label{tab:different datasets for bufferlora and ReinforceLoRA}
    \begin{adjustbox}{max width=\linewidth}
    \begin{tabular}{c|c|c|c|c}
        \cmidrule{1-5}
         \multirow{2.5}{*}{Dataset} & BufferLoRA 
          & ReinforceLoRA 
          & \multicolumn{2}{c}{\makecell{ReinforceLoRA\\+UserLoRA}} \\ 
       \cmidrule{2-5}
          & HS $\downarrow$ & HS $\downarrow$ & HS $\downarrow$ & FA $\uparrow$ \\ \cmidrule{1-5}
          BeaverTails & 83.0 & 10.2 & 13.5 & 77.4   \\
          AdvBench & 80.8 & 4.4 & 11.9 & 74.5   \\
          SafeRLHF & 82.4 & 1.9 & 14.1 & 78.4 \\
          LAT & 88.0 & 4.4 & 8.4 & 77.5 \\
          \cmidrule{1-5}
    \end{tabular}
    \end{adjustbox}
    \vspace{-2mm}
\end{table}

\begin{table}[t!]
        \centering
        \small
        \caption{\textbf{Comparison with Alignment-stage Solutions}}
        \label{tab:alignment-stage}
        \begin{tabular}{l|cc}
        \toprule
        Method & HS $\downarrow$ & FA $\uparrow$ \\
        \midrule
        RepNoise~\cite{rosati2024representation} & 57.2 & 37.9  \\
        Vaccine~\cite{huang2024vaccine} & 60.9 & 27.4  \\
        Booster~\cite{huang2025booster} & 58.1 & 40.0 \\ \cmidrule{1-3}
        Buffer-and-Reinforce (ours) & 8.4 & 77.5 \\
        \bottomrule
        \end{tabular}
        \vspace{-2mm}
\end{table}

\begin{table*}[t]
\centering
\small
\caption{\textbf{Results across models with strengthened safety-alignment and a larger 13B model.}}
\label{tab:stronger_alignment_and_13b}
\begin{tabular}{llcccc}
\toprule
\multirow{2}{*}{{Model}} 
& \multirow{2}{*}{{Method}} 
& {{BufferLoRA}} 
& {{ReinforceLoRA}} 
& \multicolumn{2}{c}{{Final}} \\
& & HS $\downarrow$ & HS $\downarrow$ & {HS} $\downarrow$ & {FA} $\uparrow$ \\
\midrule
\multirow{3}{*}{LLaMA3-8B-Instruct} 
& Base & -- & -- & 19.2 & 62.8 \\
& SFT  & -- & -- & 75.2 & 68.4 \\
& Ours & 88.2 & 4.3 & \textbf{8.4} & \textbf{77.5} \\
\midrule
\multirow{3}{*}{LLaMA3-8B-LAT} 
& Base & -- & -- & -- & 73.5 \\
& SFT  & -- & -- & 74.0 & 70.6 \\
& Ours & 87.6 & 2.1 & \textbf{3.1} & \textbf{75.2} \\
\midrule
\multirow{3}{*}{LLaMA3-8B-ReFAT} 
& Base & -- & -- & 16.2 & 40.2 \\
& SFT  & -- & -- & 74.0 & 68.9 \\
& Ours & 87.3 & 2.2 & \textbf{3.5} & \textbf{71.1} \\
\midrule
\multirow{3}{*}{LLaMA2-13B-Chat} 
& Base & -- & -- & 6.4 & 28.8 \\
& SFT  & -- & -- & 33.2 & 33.1 \\
& Ours & 85.7 & 1.5 & \textbf{4.3} & \textbf{35.4} \\
\bottomrule
\end{tabular}
\end{table*}

\subsection{Impact of Training Dataset Choice for BufferLoRA and ReinforceLoRA}
In our main manuscript, BufferLoRA and ReinforceLoRA are trained using the LAT dataset~\cite{sheshadri2024latent}, and robustness is evaluated on user-provided harmful data from BeaverTails~\cite{ji2023beavertails}. To further examine whether our framework remains effective when the FaaS provider relies on different datasets, we additionally train BufferLoRA and ReinforceLoRA using other datasets that contain both harmful responses and refusal responses, including BeaverTails, AdvBench~\cite{zou2023universaltransferableadversarialattacks}~\footnote{\url{https://huggingface.co/datasets/Baidicoot/augmented\_advbench\_v3\_filtered}}, and PKU-Safe-RLHF~\cite{ji2025pku}.

As shown in Table~\ref{tab:different datasets for bufferlora and ReinforceLoRA}, BufferLoRA and ReinforceLoRA trained on these datasets are still able to support learning user-specific tasks while reinforcing safety. However, their performance is suboptimal compared to models trained using LAT. This difference arises because BufferLoRA trained on these datasets induces a weaker jailbroken state, which less effectively saturates safety-degrading gradients during user fine-tuning.

These findings indicate that while our framework is not restricted to a specific dataset and can leverage various sources of harmful data, selecting or constructing datasets that induce sufficiently strong jailbreaking is an important factor for achieving optimal performance in our framework.

\subsection{Comparison with Alignment-stage Solutions}
\label{sec:alignment-stage-solutions}

In our main experiments, we focus on fine-tuning-stage and post-fine-tuning-stage solutions, as these approaches can be directly applied to safety-aligned LLMs and are therefore more practical in FaaS settings. Nevertheless, alignment-stage solutions have also been proposed as defenses against harmful fine-tuning attacks. For completeness, we additionally evaluate representative alignment-stage solutions on LLaMA3-8B and compare them with our approach.

As shown in Table~\ref{tab:alignment-stage}, alignment-stage solutions exhibit substantially lower performance in terms of both HS and FA. This degradation is attributed to the cross-domain setting considered in our experiments. Specifically, we assume that the harmful data available to the FaaS provider and the harmful data provided by users follow different distributions. Alignment-stage solutions aim to construct safety-aligned weights that are robust to harmful fine-tuning with respect to the provider’s harmful data. As a result, these methods are easy to overfit to the provider's harmful data distribution and may generalize poorly to user harmful data.
Moreover, as noted in prior work~\cite{ham2025safetyalignedweightsenoughrefusalteacherguided}, weights obtained through alignment-stage optimization are often suboptimal for downstream task learning. This limits their ability to achieve high utility when subsequently fine-tuned on user-specific tasks. Consequently, these factors explain the inferior HS and FA observed for alignment-stage solutions in our cross-domain evaluation and highlight their limitations in practical FaaS scenarios.

\subsection{Effectiveness on More Robust and Larger Models}
\label{sec:stronger_alignment_and_larger_models}
Our framework assumes that BufferLoRA can induce temporary jailbreaking through harmful fine-tuning. 
This assumption is more feasible than standard inference-time jailbreak attacks, because BufferLoRA is trained in a white-box setting where model parameters can be directly updated. 
Thus, the goal is not to discover an input-level jailbreak prompt, but to train a removable adapter that temporarily weakens safety during user fine-tuning.

Table~\ref{tab:stronger_alignment_and_13b} evaluates this assumption on models with stronger safety-alignment and a larger model scale. 
Under the same training budget of 3 epochs, 5,000 harmful samples, and 5,000 safety-alignment samples, our framework remains effective on LAT~\cite{sheshadri2024latent} and ReFAT~\cite{yu2025robust} models, whose safety has been further reinforced, as well as on LLaMA2-13B-Chat~\cite{touvron2023llama}. 
Across these settings, BufferLoRA still induces strong temporary jailbreaking, ReinforceLoRA achieves low harmful scores, and the final model maintains low harmful scores while improving fine-tuning accuracy.

These results indicate that our method is not specific to the default LLaMA3-8B-Instruct setting. 
Within the evaluated 4B-13B range, we do not observe a need to increase the harmful or safety-alignment data budget as model size or safety strength increases. 
However, this does not imply that scale or robustness is irrelevant in general. 
More robust frontier models may require larger budgets or stronger BufferLoRA training, and evaluating such models remains future work.

\begin{table}[t]
\centering
\small
\caption{\textbf{Cross-attack generalization across jailbreak attacks.}} 
\label{tab:cross_attack_generalization}
\begin{tabular}{lcccccc}
\toprule
& \multicolumn{2}{c}{{Backdoor}} 
& GCG & PAIR & TAP & \multirow{2}{*}{FA} 
\\ 
& HS & FA & HS & HS & HS & \\ \midrule
SFT & 73.9  & 68.7  & 74.7 & 68.4 & 53.0 & 68.4  \\ \midrule
Ours & 8.8  & 76.1 & 3.1 & 27.0 & 20.0 & 76.1 \\
\bottomrule
\end{tabular}
\end{table}

\subsection{Cross-Attack Generalization}
\label{sec:cross_attack_generalization}
Harmful fine-tuning with malicious query-response pairs is itself a strong jailbreak threat in the FaaS setting, since an attacker can directly modify model parameters by submitting harmful user data. 
However, to examine whether our method generalizes beyond this fine-tuning-based attack scenario, we further evaluate it against trigger-based~\cite{gu2019badnets} and advanced jailbreak attacks, including GCG~\cite{zou2023universaltransferableadversarialattacks}, PAIR~\cite{chao2025jailbreaking}, and TAP~\cite{mehrotra2024tree}. Here, trigger-based attacks induce harmful behavior through predefined backdoor triggers, GCG optimizes adversarial suffixes to elicit harmful responses, PAIR uses an attacker LLM to iteratively generate jailbreak prompts, and TAP extends this idea with tree-structured search over candidate prompts. 
Table~\ref{tab:cross_attack_generalization} shows that our method remains robust across diverse jailbreak attacks. 
This indicates that the proposed framework does not merely suppress a narrow attack-specific pattern, but improves broader resistance to harmful behavior elicitation.

\begin{table}[t!]
    \centering
    \small
    \caption{\textbf{Comparison with LoRA merging baselines.}}
    \label{tab:peft_merging_comparison}
    \begin{tabular}{l|cc}
    \toprule
    Method & HS $\downarrow$ & FA $\uparrow$ \\
    \midrule
    TaskArithmetic & 12.4 & 70.9 \\
    TIES-Merging & 7.9 & 74.8 \\
    LoRA-LEGO & 6.4 & 74.2 \\
    Buffer-and-Reinforce (ours) & 8.4 & 77.5 \\
    \bottomrule
    \end{tabular}
\end{table}

\begin{table*}[t!]
    \centering
    \small
    \caption{\textbf{Comparison between SafeLoRA and QR-based merging on Energy Retain, Energy Damage, Harmful Score (HS), and Fine-tuning Accuracy (FA).}}
    \label{tab:safelora_qr_merging}
    \begin{tabular}{l|cc|cc}
    \toprule
    Method & Energy Retain & Energy Damage & HS $\downarrow$ & FA $\uparrow$ \\
    \midrule
    SafeLoRA~\cite{hsu2024safe} & $1.20e^{-5}$ & 0.999 & 18.0 & 63.3 \\
    Buffer-and-Reinforce (ours) & 1.09 & $1.31e^{-3}$ & 8.4 & 77.5 \\
    \bottomrule
    \end{tabular}
\end{table*}

\begin{table*}[t]
\small
\centering
\caption{\textbf{Results under larger user-data settings on GSM8K.}}
\label{tab:large_samples}
\begin{tabular}{l|cccc|cccc}
\toprule
\multirow{2.5}{*}{Methods} & \multicolumn{4}{c}{Harmful Score ($\downarrow$)} & \multicolumn{4}{c}{Fine-tuning Accuracy ($\uparrow$)} \\
& n=1000 & n=5000 & n=7000 & Average & n=1000 & n=5000 & n=7000 & Average \\
\midrule
SFT      & 32.6 & 28.2 & 30.8 & 30.5 & 71.3 & 71.6 & 71.0 & 71.3 \\
Buffer-and-Reinforce (Ours) & \textbf{8.9} & \textbf{8.6} &  \textbf{9.3} & \textbf{8.9} & \textbf{76.4} & \textbf{76.6} & \textbf{77.1} & \textbf{76.7} \\
\bottomrule
\end{tabular}
\end{table*}

\subsection{Comparison with Representative LoRA Merging Methods}
\label{sec:peft_merging_comparison}

We compare our QR-based merging strategy with representative LoRA merging methods, including Task Arithmetic~\cite{ilharco2023editing}, TIES-Merging~\cite{yadav2023ties}, and LoRA-LEGO~\cite{zhao2025merging}. 
Table~\ref{tab:peft_merging_comparison} shows that our method achieves the best overall safety-utility trade-off. 
Although some baselines obtain slightly lower harmful scores, they cause a clearer drop in fine-tuning accuracy.

This result is consistent with our motivation. 
In our setting, UserLoRA is trained under BufferLoRA-based fine-tuning and therefore contains limited harmful information. Thus, the key objective at the merging stage is to preserve the utility of UserLoRA while injecting the safety behavior of ReinforceLoRA with minimal interference. 
Generic LoRA merging methods do not explicitly account for this asymmetric priority between the two adapters. 
By contrast, our QR-based merging is designed to remove ReinforceLoRA components that interfere with the UserLoRA subspace, leading to a better safety-utility trade-off.

\section{Discussion}
\subsection{SafeLoRA vs. QR-Based Merging}
\label{sec:safelora_qr_merging}
SafeLoRA and our QR-based merging strategy target different merging objectives. 
SafeLoRA derives its safe subspace from the weight difference between the Instruct and Base models, so the resulting subspace can retain substantial utility information in addition to safety-related components. 
Projecting UserLoRA onto this subspace can therefore preserve utility while suppressing unsafe behavior. 

In contrast, ReinforceLoRA in our framework is specialized for safety reinforcement, and our QR-based merging incorporates it in a utility-preserving manner. Since BufferLoRA-based fine-tuning already prevents most harmful updates from being encoded into UserLoRA, the key objective at the merging stage is to inject safety from ReinforceLoRA while minimally interfering with UserLoRA's utility.

To quantify this geometric difference, we measure layer-averaged Energy Retain and Energy Damage. 
For each layer $\ell$, let $W_U^{\ell}$ denote the original UserLoRA update and let $\hat{W}^{\ell}$ denote the projected or merged update. 
We first compute the projection of $\hat{W}^{\ell}$ onto the original UserLoRA direction:
\begin{equation}
\mathrm{Proj}_{W_U^{\ell}}(\hat{W}^{\ell})
=
\frac{
\left\langle \mathrm{vec}(\hat{W}^{\ell}), \mathrm{vec}(W_U^{\ell}) \right\rangle
}{
\left\|\mathrm{vec}(W_U^{\ell})\right\|_2^2
}
W_U^{\ell}.
\end{equation}
Energy Retain measures how much energy is preserved along the original UserLoRA direction:
\begin{equation}
\mathrm{Energy\ Retain}^{\ell}
=
\frac{
\left\|\mathrm{Proj}_{W_U^{\ell}}(\hat{W}^{\ell})\right\|_F^2
}{
\left\|W_U^{\ell}\right\|_F^2
}.
\end{equation}
Energy Damage measures the corresponding loss of UserLoRA-aligned energy:
\begin{equation}
\mathrm{Energy\ Damage}^{\ell}
=
\max \left(0,\ 1 - \mathrm{Energy\ Retain}^{\ell}\right).
\end{equation}
We report the averages of $\mathrm{Energy\ Retain}^{\ell}$ and $\mathrm{Energy\ Damage}^{\ell}$ across layers in Table~\ref{tab:safelora_qr_merging}.

Table~\ref{tab:safelora_qr_merging} shows that directly projecting UserLoRA onto the ReinforceLoRA subspace removes most energy aligned with the original UserLoRA direction, leading to severe utility degradation. 
By contrast, QR-based merging achieves Energy Retain greater than 1 because its soft orthogonalization preserves the UserLoRA direction while retaining ReinforceLoRA components aligned with it.

Thus, projection-based merging is suitable when the target subspace contains both safety and utility information, whereas our setting requires preserving the user-task subspace while adding a safety-specialized adapter. 
This explains why QR-based merging better matches the asymmetric objective of our Buffer-and-Reinforce framework.

\begin{table}[!t]
\centering
        \scriptsize
        \caption{\textbf{Ablation study under benign-only user fine-tuning.} We evaluate the contribution of each component when the user data contains no harmful samples ($p=0$).}
        \label{tab:ablation_p0}
        \begin{tabular}{c|ccc|cc}
        \toprule
        Row & BufferLoRA & ReinforceLoRA & QR & HS $\downarrow$ & FA $\uparrow$ \\
        \midrule
        1 & X & X & X & 32.6 & 71.3  \\
        2 & O & X & X & 17.8 & 75.1 \\
        3 & O & O & X & 4.3 & 74.1  \\
        \cmidrule{1-6}
        4 & X & O & X & 3.8 & 68.9  \\
        5 & X & O & O & 8.5 & 75.5  \\
        \cmidrule{1-6}
        6 & O & O & O & \textbf{8.9} & \textbf{76.4}  \\
        \bottomrule
        \end{tabular}
\end{table}

\subsection{Analysis of Fine-tuning Accuracy Improvements}
\label{sec:fa_improvement_analysis}

To better understand why our method improves FA even when $p=0$, we first examine whether the gain can be explained by insufficient user data in the SFT baseline, by evaluating larger user-data settings with 5,000 and 7,000 GSM8K samples.

As shown in Table~\ref{tab:large_samples}, increasing the user data size does not close the FA gap between SFT and our method. 
With 5,000 and 7,000 user samples, SFT achieves 71.6 and 71.0 FA, whereas our method achieves 76.6 and 77.1 FA, respectively. 
Moreover, our method consistently achieves substantially lower HS across all data sizes. 
These results suggest that the improvement is unlikely to be explained solely by insufficient user data in the SFT baseline.

We then conduct an additional ablation study under benign-only user fine-tuning to identify which components contribute to the FA gain. 
Table~\ref{tab:ablation_p0} shows that BufferLoRA alone improves FA from 71.3 to 75.1, whereas ReinforceLoRA alone does not improve FA, achieving 68.9. 
This suggests that BufferLoRA itself contributes to the utility gain during user fine-tuning. 
A plausible explanation is that benign-only fine-tuning can still induce safety-degrading updates, while BufferLoRA stabilizes fine-tuning by suppressing such directions without disrupting utility-relevant gradients. 
As a result, FA can improve even when $p=0$.

In addition, ReinforceLoRA with QR-based merging further improves FA to 76.4. 
Since ReinforceLoRA is trained on both harmful-refusal pairs and benign query-answer pairs, it can contain utility-relevant components in addition to safety-related ones. 
Our QR-based merging preserves these useful components through soft orthogonalization while injecting safety, which can maintain or even improve downstream utility. 
This is also consistent with the energy-based analysis in Table~\ref{tab:safelora_qr_merging}. 
Thus, the improvement reflects the intended post-fine-tuning effect of our framework.

\section{LLM Usage}
Large Language Models (ChatGPT-5.2) were used solely to improve grammar, clarity, and writing quality. They did not contribute to research ideation, methodological design, experimental execution, data analysis, or interpretation of results.


\end{document}